\documentclass{article}
\usepackage{graphicx} 

\usepackage[T1]{fontenc}
\usepackage[utf8]{inputenc}
\usepackage{lmodern}
\usepackage{amsmath,amssymb,amsthm,mathtools}
\usepackage{bm}
\usepackage{enumitem}
\usepackage{hyperref}
\usepackage{natbib}
\usepackage{pifont}
\usepackage{booktabs}
\usepackage{tabularx}
\usepackage{xcolor}
\setcitestyle{authoryear} 
\usepackage[preprint]{neurips_2026}

\newtheorem{theorem}{Theorem}

\newtheorem*{remark}{Remark}

\title{Extracting Governing Equations from Latent Dynamics via Multi-View Contrastive Learning}
\author{%
    Paolo Muratore\\
    EPFL\\
    \texttt{paolo.muratore@epfl.ch} \\
    \And
    Mackenzie Weygandt Mathis\\
    EPFL\\
    \texttt{mackenzie.mathis@epfl.ch}
}

\begin{document}

\maketitle

\begin{abstract}
Identifying latent dynamical systems from noisy, high-dimensional measurements is a central problem at the intersection of representation learning, system identification, and scientific discovery. We present \textsc{dysco}, a multi-view temporal contrastive learning algorithm that jointly recovers latent trajectories and the governing dynamics from such observations, by leveraging multiple independent noisy views of the same underlying process to disentangle signal from noise. By parameterizing the dynamics in a structured functional basis, our framework further enables symbolic recovery of the governing equations within an affine gauge. We offer theoretical guarantees for strong identification up to an affine indeterminacy, extending prior identifiability results to the realistic setting of noisy nonlinear observations. Empirically, we demonstrate accurate recovery of both latent trajectories and flow fields across a diverse set of dynamical regimes (\emph{e.g}, chaotic, oscillatory, and metastable) under both Gaussian and Poisson observation noise, the latter being particularly relevant for neural recordings.
\end{abstract}

\section{Introduction}
A central challenge in neuroscience and machine learning is recovering the computational laws governing a system from noisy, high-dimensional observations of its activity. This problem is classically articulated in Marr’s tri-level framework \citep{marr1982vision}, which distinguishes between: the computational level, \emph{e.g.}, sorting numbers in ascending order, the algorithmic level, \emph{e.g.}, a particular procedure, such as bubble-sort or merge-sort, and the implementation level, \emph{i.e.}, the circuit of transistors switching on and off. While modern data-driven methods have made substantial progress at distilling the implementation level of raw neural activity into abstract latent representations \citep{schneiderLearnableLatentEmbeddings2023,mathis2026joint}, the principled recovery of the dynamical laws governing these representations (\emph{i.e.} the complete algorithm description of the system) remains an open challenge \citep{sussilloOpeningBlackBox2013, vyasComputationNeuralPopulation2020, versteegComputationthroughDynamicsBenchmarkSimulated2025}. Bridging this gap is essential for interpretability, scientific discovery and the development of robust, generalizable models.

\begin{figure}[tb]
    \centering
    \includegraphics[width=\linewidth]{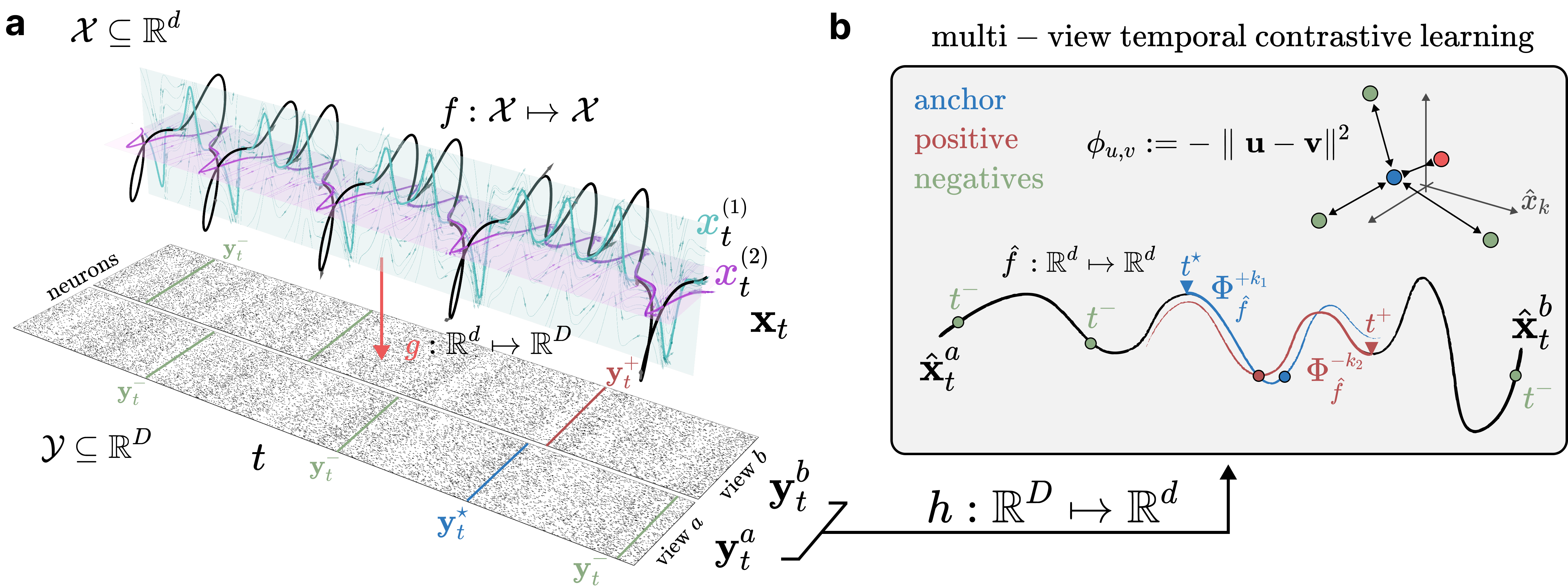}
    \caption{\textbf{Graphical overview.} (\textbf{a}) We consider a latent dynamical system (top) that evolves according to an unknown dynamics $f : \mathcal{X} \mapsto \mathcal{X}$. We observe the system via a non-linear mixing-channel $g: \mathbb{R}^d \mapsto \mathbb{R}^D$ that produces noisy high-dimensional observations $\bm{y}_t^a$. (\textbf{b}) We solve the identifiability problem by simultaneously learning an encoder $h: \mathbb{R}^D \mapsto \mathbb{R}^d$ and a symbolic dynamics $\hat f$ by minimizing the multi-view temporal contrastive loss defined in \eqref{eq:minimisation_problem}.}
    \label{fig:model_overview}
\end{figure}

In many domains, ranging from neural population recordings to complex physical systems, the observable data are high-dimensional, noisy and generated through nonlinear measurement processes. The underlying system, however, is often governed by a low-dimensional latent dynamical process \citep{mastrogiuseppe2018linking}. Recovering both the latent state and the governing dynamics from such observations constitutes a fundamental inverse problem at the intersection of representation learning, system identification, and scientific modeling. A key obstacle is that noisy, nonlinear observations sever the identifiability guarantees that underpin existing approaches: noise corrupts the signal on which both representation learning and symbolic methods rely, and without multiple independent views of the same latent state there is no principled mechanism to separate signal from nuisance. Recent advances in self-supervised and contrastive learning have demonstrated that temporal structure provides a powerful inductive bias for extracting meaningful representations from sequential data. However, existing approaches primarily focus on learning useful representations, rather than explicitly identifying the underlying dynamical system that generates the data. 
In parallel, methods for symbolic regression and dynamical system discovery aim to recover explicit functional forms of the dynamics \citep{bruntonDiscoveringGoverningEquations2016}, enabling interpretability and mechanistic insight. Yet, these approaches typically assume direct or partially observed access to the system state (but see \citep{championDatadrivenDiscoveryCoordinates2019} for a variational autoencoder approach to latent dynamical systems), limiting their applicability in realistic settings where observations are indirect and noisy. This disconnect highlights a key open problem: \emph{can we simultaneously learn latent representations and recover the governing dynamical laws from high-dimensional, noisy observations, with theoretical guarantees of identifiability?}

In this work, we address this problem by introducing a multi-view temporal contrastive learning framework for the identification of latent dynamical systems.
Our approach leverages multiple independent noisy views of the same underlying system to enforce consistency constraints that effectively disentangle signal from noise, while a structured parameterization of the dynamics enables symbolic recovery of the governing equations. Concretely, we consider a setting where a latent dynamical system evolves according to an unknown nonlinear map and is observed through multiple noisy, nonlinear measurement channels. We jointly learn (\emph{i}) an encoder that inverts the observation process and recovers latent states, and (\emph{ii}) a structured dynamics model expressed in a predefined functional basis. The learning signal is provided by a temporal contrastive objective that exploits both time evolution and cross-view consistency. As illustrated in our formulation (see Figure \ref{fig:model_overview}), this setup allows the model to align representations across views while respecting the underlying dynamics. Our main contributions are as follows:

\begin{itemize}
\item [\emph{(i)}]\emph{Multi-view contrastive system identification}. We propose \textsc{dysco}, a novel contrastive learning framework that leverages temporal structure and multiple noisy views to jointly recover latent states and dynamics from high-dimensional observations.

\item [\emph{(ii)}]\emph{Theoretical identifiability guarantees}. We show that, under asymptotic assumptions, multi-view contrastive learning identifies the latent dynamical system up to an affine transformation, extending prior results to the realistic setting of noisy, nonlinear observations.

\item [\emph{(iii)}]\emph{Compatibility with symbolic regression}. We show how the affine gauge structure of the recovered representation is compatible with downstream symbolic-regression pipelines, and illustrate this on the Lorenz system.

\item [\emph{(iv)}]\emph{Empirical validation across dynamical regimes}. We demonstrate accurate recovery of both latent trajectories and flow fields across a diverse set of systems, including chaotic, oscillatory, and metastable dynamics, even under substantial observational noise.
\end{itemize}

Taken together, our results suggest that contrastive learning, when appropriately structured, provides a principled route toward solving Marr’s inverse problem: moving from raw observations at the implementation level to explicit, interpretable descriptions of the underlying computation.

\textbf{Related Work} Recovering latent dynamical systems from high-dimensional observations sits at the intersection of representation learning, system identification, and symbolic regression. The work most closely related to ours is \citep{laiz2025}, which demonstrates that contrastive objectives can recover both latent representations and nonlinear dynamics up to an affine transformation. Our work builds on this foundation but departs from it in two important ways: first, we explicitly leverage multi-view observations to handle realistic noisy measurement processes, where \citep{laiz2025}'s identifiability relies on noiseless observations---a condition that multi-view consistency is specifically designed to relax by forcing the encoder to extract only the shared, noise-free component across views; second, we enforce a symbolic structure on the learned dynamics, moving beyond the switching-linear dynamical system formulation and enabling recovery of interpretable governing equations. We situate this contribution below within the broader literature.

Recent advances in self-supervised learning, and in particular contrastive learning, have demonstrated that temporal structure can provide a powerful signal for extracting latent representations. Methods such as Contrastive Predictive Coding \citep{oord2018representation} and its variants \citep{schneider2019wav2vec, henaff2020data} leverage predictive objectives to learn embeddings that capture underlying dynamics. However, these approaches are typically optimized for representation quality rather than explicit recovery of the governing dynamical system. This gap has motivated a line of theoretical work connecting contrastive objectives to identifiable representation learning. In particular, nonlinear ICA frameworks \citep{hyvarinenUnsupervisedFeatureExtraction2016, hyvarinenNonlinearICAUsing2019, halvaDisentanglingIdentifiableFeatures2021} show that temporal or auxiliary structure can render latent variables identifiable under suitable conditions (see \citep{hyvarinenNonlinearIndependentComponent2023} for a review), and recent results have further linked contrastive learning to inversion of the data-generating process \citep{zimmermannContrastiveLearningInverts2021}. Nevertheless, these works focus primarily on recovering latent states, leaving the identification of the underlying dynamics largely unaddressed.

In neuroscience, related efforts have focused on inferring latent population dynamics from noisy neural recordings. Sequential autoencoder models such as LFADS \citep{pandarinathInferringSingletrialNeural2018a, kimInferringLatentDynamics2021, shahImprovedInterpretabilityLFADS2025} and more recent contrastive approaches such as CEBRA \citep{schneiderLearnableLatentEmbeddings2023} learn low-dimensional representations that capture behaviorally relevant structure. While highly effective empirically, these methods typically either lack identifiability guarantees or do not recover explicit dynamical laws.

The symbolic identification of dynamical systems has been extensively studied in the context of sparse regression methods such as SINDy \citep{bruntonDiscoveringGoverningEquations2016, klishinStatisticalMechanicsDynamical2025}, which recovers governing equations by selecting a sparse combination of candidate basis functions. More recent approaches extend this idea using sequence \citep{dascoliODEFormerSymbolicRegression2023} or diffusion-based models \citep{bastiani2025diffusion} for symbolic regression. These methods, however, typically assume direct (or partial) access to the system state. Closer in spirit, \cite{championDatadrivenDiscoveryCoordinates2019} adapted SINDy within a variational autoencoder framework for discovery of latent dynamics. However this framework lacks any theoretical guarantee and the multi-term loss function requires careful balancing of the relevant terms. In contrast, our approach provides theoretical identifiability guarantees within an affine gauge using a single contrastive objective, with symbolic recovery as a downstream step.

\section{Contrastive Learning Identifies Latent Dynamical Systems}
\label{sec:contrastive_learning_identifies}

\textbf{Problem definition} We study the following non-linear identification problem. We consider a latent (\emph{i.e.}, non-observable) state variable $\bm{x}_t \in \mathcal{X} \subseteq \mathbb{R}^d$ whose stochastic discrete-time evolution can be described via a dynamical system of the form:
\begin{equation}
    \bm{x}_{t+1} = f (\bm{x}_{t} ) + C \bm{u}_t + \bm \varepsilon_t,
\label{eq:latent_dyn_sys}
\end{equation}
where \(f : \mathcal{X} \mapsto \mathcal{X}\) denotes the bijective dynamics model, $\bm{u}_t \in \mathbb{R}^u$ is an external forcing input, $C \in \mathbb{R}^{d \times u}$ is the input coupling matrix and $\bm \varepsilon_t \in \mathbb{R}^d$ is the latent system noise. We observe this system via the non-linear observation channel:
\begin{equation}
    \bm{y}_t^a = g(\bm{x}_t) + \bm{\xi}_t^a,
\label{eq:observation_model}
\end{equation}
where $\bm{y}_t^a \in \mathbb{R}^D$, $D \ge d$ is the observed state, $g: \mathcal{X} \mapsto \mathbb{R}^D$ is a non-linear injective mixing function from latents to observations and $\bm \xi_t^a$ denotes the observation noise assumed independent from the latent state, with $a$ indexing independent realizations of the noise process. Our objective is the following: given a set of observed trajectories $\Upsilon \coloneq \left\{ \bm{y}_t^a \right\}_{t, a}^{T, V}$, with $T$ the total trajectory length and $V \ge 2$ the total number of independent \emph{views} of the system, reconstruct the latent states $\left\{ \bm{x}_t \right\}_t^T$ and the dynamics $f$. In practice, we seek to learn a de-mixing function $h : \mathbb{R}^D \mapsto \mathcal{X}$ and a dynamics model $\hat f : \mathcal{X} \mapsto \mathbb{R}^d$ such that the composition $r \coloneqq h \circ g$ is a trivial function (ideally, the identity) and $\hat f$ recapitulates $f$. Conceptually, we can also think of the system \eqref{eq:latent_dyn_sys} as the discrete-time analog of an underlying continuous-time process.

\textbf{Model definition} We follow a similar construction from \citep{laiz2025} and employ a contrastive learning framework for solving the identification problem. We introduce an encoder $h_\theta : \mathbb{R}^D \mapsto \mathbb{R}^d$, \emph{i.e.}, a neural network parametrized by $\theta$ and a symbolic model for the dynamics $\hat f_\Xi : \mathbb{R}^d \mapsto \mathbb{R}^d$, represented as the span of a pre-defined set of library scalar functions $\Xi \coloneq \left\{ f_1, f_2, \dots, f_n \right\} \in \mathbb{R}^n$ \citep{bruntonDiscoveringGoverningEquations2016}. Typically in this work, we choose $\Xi$ as the set of all monomials up to a given degree, \emph{e.g.} for a two-dimensional system $\Xi^{(2)} = \left\{ 1, x, y, x^2, y^2, xy\right\}$ would represent the polynomial basis of degree two. The choice of the symbolic basis $\Xi$ can be used to trade complexity (more terms, different function primitives $f_{i}$) for expressivity. The flow-field at a given point $\hat f_\Xi (\bm x_t )$ is then expressed as:
\begin{equation}
    \hat f_\Xi ( \bm x_t) = \bm \Theta \Xi (\bm x_t) = \sum_k^n \bm \theta_k f_k (\bm x_t),
\label{eq:symbolic_flow_field}
\end{equation}
where $\bm \Theta \in \mathbb{R}^{d \times n}$ is a learnable matrix of basis coefficients and $\Xi \left( \bm x_t \right) \in \mathbb{R}^{n}$ is the vector of scalar functions evaluated at $\bm x_t$. Given an estimate of the latent dynamics $\hat f$ and an initial state $\bm x_t$, we can compute the trajectory roll-out for a given time horizon $t + k$ by forward-integration:
\begin{align}
    &\bm x_{t + k} = \Phi^{k}_f ( \bm x_t ; \bm u_{t:t+k-1} ), \qquad \mathrm{where}\\\
    &\Phi^0 (x_t) = x_t, \quad \Phi^{k+1} ( \bm x_t; \bm u_{t:t+k}) = f \left( \Phi^k (\bm x_t ; \bm u_{t:t+k-1} \right) + C \bm u_{t+k}
\label{eq:forward_integration_operator}
\end{align}
Finally, since the core objective of contrastive learning is to model the pairwise similarities of the data, we consider a similarity function $\phi: \mathbb{R}^d \times \mathbb{R}^d \mapsto \mathbb{R}$, which we will usually take to be the negative squared euclidean distance between two points. We can now define our model \textsc{DYSCO} as the following composition:
\begin{equation}
    \psi_{ab} (\bm y^a_t, \bm y^b_\tau) \coloneq \phi \left( \Phi_{\hat f}^{\pm k_1} \left( h(\bm y^a_t ) \right), \Phi_{\hat f}^{\mp k_2} \left(h (\bm y^b_\tau )\right)\right) - \alpha \left( h (\bm y^b_\tau ) \right),
\label{eq:model_definition}
\end{equation}
where $k_{1, 2} \in \left\{ 0, 1, \dots, \kappa \right\}$ with $\kappa$ a model hyperparameter defining the maximal time-integration horizon, $\alpha : \mathbb{R}^d \mapsto \mathbb{R}$ is a scalar potential, $a, b \in V$ are view indices and we have implicitly relied on the fact that the deterministic-part of the dynamics is invertible at every $t$ and have used the following notation for the inverse iteration:
\begin{equation}
    \bm x_t = \Phi^{-k} (\bm x_{t+k} ) \coloneq \Phi^{-1}_{t} \circ \cdots \circ \Phi^{-1}_{t+k-1} ( \bm x_{t+k}).
\label{eq:backward_map}
\end{equation}
Our model can then be viewed as implementing an amortized inference of the full multi-view model defined equivalently as:
\begin{equation}
    \Psi_V \left(\bm y^{(V)}_t, \bm y^{(V)}_\tau \right) \coloneq \phi \left( \Phi_{\hat f}^{\pm k_1} \left( H_V(\bm y^{(V)}_t ) \right), \Phi_{\hat f}^{\mp k_2} \left( H_V (\bm y^{(V)}_\tau )\right)\right) - \alpha' \left( H_V (\bm y^{(V)}_\tau )\right),
\label{eq:model_definition_full_views}
\end{equation}
where $H_V : \mathbb{R}^{D\times V} \mapsto \mathbb{R}^d$ is a full-multi-view encoder, $\bm y_t^{(V)} \coloneq \left\{ \bm y_t^1, \dots, y_t^V \right\}$ is the full set of observations at time $t$ and $\alpha' : \mathbb R^{d}\mapsto \mathbb R$ is the corresponding scalar potential.

From here, our learning framework follows the standard contrastive learning setup with InfoNCE objective function \citep{oord2018representation}, \emph{i.e.} we learn to minimize the following expression for the log-likelihood:
\begin{equation}
    \log p_{\psi} \left( \bm y_t^{a, \star} | \bm y_t^{b, +}, \left\{ \bm  y_t^{c, -}\right\} \right) = \psi_{ab} \left( \bm y_t^{a, \star}, \bm y_t^{b, +} \right) - \log \sum_{\bm y_t^{\ast} \in \bm y_t^{b, +} \cup \left\{ \bm y_t^{c, -} \right\}} \exp \left( \psi_{a\ast} \left( \bm y_t^{a, \star}, \bm y_t^\ast \right) \right),
\label{eq:model_log_likelihood}
\end{equation}
where we have used the following notation $\bm y_t^\star$, $\bm y_t^+$, $\left\{ \bm y_t^- \right\}$ to denote, respectively, the reference (or anchor) and positive samples and the set of negative samples. The learning problem can then be fully specified as the minimization:
\begin{equation}
    \min_\psi \mathcal{L} \left[ \psi \right] = \min_{\psi} \mathbb{E}_{a, b, c} \mathbb{E}_{k_1, k_2} \mathbb{E}_{t, \tau} \left[ -\log p_{\psi} \left( \bm y_{t\mp k_1}^{a} | \bm y_{t \pm k_2}^b, \left\{ \bm y_\tau^{c} \right\} \right) \right],
\label{eq:minimisation_problem}
\end{equation}
where all the expectations are computed by uniform sampling in the relevant ranges. Intuitively, the learning process is constructed in such a way as to exploit the continuity and time-reversal symmetry of the latent dynamical system, while the enforced across-view consistency provides the encoder with the learning signal to effectively denoise the observations $\bm y_t^\alpha$ by forcing it to rely only on the shared component of the signal (the $g (\bm x_t )$ in \eqref{eq:observation_model}) and discard the nuisance (noise) component $\bm \xi^\alpha_t$. We offer a graphical overview of the problem in Figure \ref{fig:model_overview}.

We can now state our main theoretical result for the full-views model \eqref{eq:model_definition_full_views}, formulated for the asymptotic limit $V \to \infty$ and the particular case of no external control $\bm u_t \equiv 0$. Our result extends \cite{laiz2025} to the case of noisy observations. We then show that strong identification holds well in practice also for the DYSCO model \eqref{eq:model_definition} and the case of (known) external forcing.

\begin{theorem}[Multi-view contrastive estimation of noisy dynamics]\label{thm:identifiability_theorem}
Consider the latent dynamical system $\bm x_{t+1} = f(\bm x_t) + \bm \varepsilon_t$ with $f : \mathbb R^d \mapsto \mathbb R^d$, a $C^2$ diffeomorphism and noise $\bm \varepsilon_t \sim \mathcal{N} \left(0, \Sigma \right)$, $\Sigma \succ 0$, independent of time. We observe the system via the non-linear observation channel \eqref{eq:observation_model} with $g : \mathbb{R}^d \mapsto \mathbb{R}^D$, $D \ge d$ a $C^1$ bi-Lipschitz embedding and sub-Gaussian noise $\bm \xi_t^a$ independent of $\bm x_t$ with $a = 1, \dots, V$. Finally, consider the full multi-view model $\Psi_V$ defined in \eqref{eq:model_definition_full_views}. Assume the following:

\begin{enumerate}
    \item[\textnormal{(A1)}] The multi-horizon objective $\mathcal L [ \Psi_V]$ is jointly realizable, \emph{i.e.} there exists a single tuple $( H_V, \hat f, \alpha )$ attaining the infimum of every active constituent $\mathcal L_V^{k_1,k_2}$ and $\lambda^{(1, 0)} > 0$.

    \item[\textnormal{(A2)}] In the infinite-view population limit, the population global minimizers have a stable $C^1$ limit for both representation and dynamics:
    \begin{equation*}
        H_V (\bm y_t^{(V)}) \to r(\bm x_t) \qquad \mathrm{in\ probability}
    \end{equation*}
    for $r \in C^1$ and $\hat f_V \to \hat f \in C^1$ on the learned support.
\end{enumerate}

Then, in the limit $V, T \to \infty$, any global minimizer identifies the latent state and the dynamics up to a common affine indeterminacy, \emph{i.e.}  there exist $L\in\mathrm{GL}(d)$ and $\bm b\in\mathbb R^d$ such that for every point $\bm x \in \mathcal{U}$:
\begin{equation}
    r (\bm x) = L \bm x + \bm b,
\end{equation}

moreover, the learned deterministic dynamics $\hat f$ are the affine conjugate of the true dynamics:
\begin{equation}
    \hat f (\bm z) = L f \left( L^{-1} \bm (z - \bm b) \right) + \bm b , \qquad \bm z \in r(\mathcal U).
\end{equation}

\begin{proof}
    See Appendix \ref{sec:appendix_theorem_proofs} for the full proof and subsequent remarks.
\end{proof}
    
\end{theorem}

\section{Experimental Setup}
\label{sec:experimental_setup}

We validate our theoretical results by verifying that our algorithm can recover known latent dynamical systems. We explored a wide variety of dynamical regimes and external forcing configurations, ranging from chaotic (\emph{e.g.} Lorenz attractor) to metastable (\emph{e.g.} heteroclinic) to periodic (\emph{e.g.} Duffing oscillator) with fast-and-slow dynamics (\emph{e.g.} FitzHugh-Nagumo oscillator), proving that our system is robust and applicable to diverse, realistic scenarios relevant to machine learning and neuroscience.

\textbf{Data Generating Process} Given a dynamical system of the form \eqref{eq:latent_dyn_sys} (or the equivalent continuous-time formulation specified via the corresponding ODE) we generated a collection of latent trajectories $\left\{ \bm x_t^{i}\right\}_t^T$ of equal length $T = 2^{16}$ time steps, for different initial conditions $\bm x_0^{i}$. We forward-time integrated our system using the Euler-Maruyama scheme and reserved some latent trajectories $\bm x_t^i$ as a held-out validation set and report all our metrics on such examples. We followed the literature \citep{laiz2025, zimmermannContrastiveLearningInverts2022, hyvarinenUnsupervisedFeatureExtraction2016} and parametrized the mixing function $g$ as a 4-layer randomly initialized MLP with explicit control over the condition number for every layer matrix to ensure injectivity and final number of visible dimensions $D = 256$. The complete observation channel \eqref{eq:observation_model} is then obtained by passing the mixing-function's output through a noisy channel. In our experiments we modeled the noise either as a standard additive Gaussian process or as a non-homogeneous Poisson process, both independent across visible dimensions:
\begin{equation}
    \bm{y}_t^a \sim \mathcal{N} \left( g(\bm x_t), \Sigma_\xi \right) \qquad \mathrm{or} \qquad
    \bm{y}_t^a \sim \mathrm{Poisson} \left( \left[ g (\bm  x_t) \right]_+ \Delta t \right),
\label{eq:noise_channels}
\end{equation}
where $\left[\ \cdot\ \right]_+$ is a rectification function to ensure positivity, which we simply take to be a linear rescaling of the output into a pre-defined range followed by a ReLU gate, while we fixed $\Sigma_\xi \equiv \sigma^2 \mathrm{Id}$, $\sigma \in \mathbb{R}$. Different views $a$ are obtained via independent realizations of the same process. We stress that the Poisson construction formally violates the assumptions of Theorem \eqref{thm:identifiability_theorem} since the observation-noise statistics depend on the latent state $\bm x_t$. However, we explored this particular form since it is an ubiquitous model of neural spiking dynamics \citep{pandarinathInferringSingletrialNeural2018} with $\left[ g (\bm x_t)\right]_+$ representing the instantaneous firing rates, making it an important target for our method, and we empirically show that our model still works well in practice. In the cases where an external control signal $\bm u_t$ was present, we treated it as known input to our model and let it learn the appropriate input coupling matrix $C$.

\textbf{Model architecture} In our experiments the encoder network $h$ is a 4-layer MLP with GELU activations and hidden layers of dimension $64$, with appropriate input and output dimensions, i.e. $D$ and $d$. In the experiments with Poisson observations, we included a custom input-normalization layer designed to mitigate the impact of a mean-dependent noise variance introduced by the Poisson sampling. We defined it as the sequential composition of the Freeman-Tukey transform \citep{freeman1950transformations}, a short smoothing convolutional kernel and a final batch-normalization layer. For the dynamics model $\hat f$ we used a polynomial basis (the collection of all monomials of degree up to $\delta$) of sufficient total degree $\delta$ to exactly capture the ground-truth dynamics (\emph{e.g.} $\delta  = 2$ for the case of the Lorenz system). We modeled the coefficients $\bm \Theta$ in \eqref{eq:symbolic_flow_field} as the output of a small 2-layer MLP with hidden dimensions $64$ and learned $32$-dimensional global embedding as input, and computed the trajectory roll-outs (both forward and backward in time) for a maximum time horizon $\kappa = 8$ using the RK4 scheme implemented via the \textsc{odeint} function of the \textsc{torchdiffeq} package \citep{torchdiffeq}. We followed the recommendation in \citep{laiz2025} and omitted the scalar potential term since it was shown to have negligible impact on model accuracy while having a significant computational cost.

\textbf{Evaluation metrics} The evaluation metrics used to measure the model performance reflect the affine indeterminacy expected from Theorem \eqref{thm:identifiability_theorem}. Given the ground-truth trajectory $\bm x_t$ and the model estimate ${\bm{\hat x}_t}$, we numerically estimate the optimal aligning affine transformation by solving the optimization problem:
\begin{equation}
    L^\star, \bm b^\star \coloneq \arg\min_{L, \bm b} \sum_t^T \parallel \bm x_t - \left( L \bm{\hat x}_t +\bm b \right) \parallel^2_2,
\label{eq:affine_transformation_alignement}
\end{equation}
from which we derive the aligned model trajectory $\bm{\hat{x}}_t^\star = L^\star \bm{\hat x}_t + \bm b^\star$. The model performance is then quantified as the $R^2$ between the ground truth trajectory $\bm x_t$ and the aligned model trajectory $\bm{\hat{x}}_t^\star$. Similarly, for the flow field $\hat f$, we again leverage insights from Theorem \eqref{thm:identifiability_theorem} which states that the model flow-field is conjugate to the ground-truth one under the same affine transformation $\left( L^\star, \bm b^\star \right)$. We therefore compute the $\mathrm{dyn}R^2$ metric \citep{laiz2025} as the $R^2$ score between $f (\bm x_t)$ and $\hat f^\star (\bm{\hat{x}}_t) = L^\star \hat f(\bm{\hat{x}}_t)$.

\textbf{Training details} We trained our model with a combination of AdamW \citep{loshchilov2017decoupled} and Muon \citep{jordan2024muon} optimizers: all hidden-layer matrix parameters were assigned to the Muon optimizers, while the rest was optimized via AdamW. We used a learning rate of $3 \times 10^{-4}$ and a weight decay of $10^{-4}$ (for both optimizers). We trained with a batch size of $4096$ on a single NVIDIA RTX A4000 for $200$ epochs. Depending on the system configuration, the training time was approximately two hours.

\section{Results}
\label{sec:results}

\begin{table}[tb]
\centering
\renewcommand{\arraystretch}{1.2}
\setlength{\tabcolsep}{8pt}
\caption{\textbf{Identification of latent dynamical systems.} We present results for several (non-linear) dynamical systems, spanning several dynamical regimes from chaotic attractors to forced oscillators to point attractors and varying external forcing configuration. For a formal definition of each dynamical system we defer to the Appendix \ref{sec:appendix_dyn_sys}. Each value is the mean across $3$ runs and we report the standard deviation.}
\begin{tabular*}{\textwidth}{@{\extracolsep{\fill}} lcc cccc}
\toprule
& & \multicolumn{3}{c}{Gaussian noise} & \multicolumn{2}{c}{Poisson noise} \\
\cmidrule(lr){3-5}\cmidrule(lr){6-7}
Dynamical System 
& $\bm u_t$ 
& $\sigma$ 
& $\%\ R^2\ (\uparrow)$ 
& $\%\ \mathrm{dyn}R^2\ (\uparrow)$
& $\%\ R^2\ (\uparrow)$ 
& $\%\ \mathrm{dyn}R^2\ (\uparrow)$ \\
\midrule
Duffing 
& \ding{51} & 0.1
& 99.1 $\pm$ 0.9 & 97.8 $\pm$ 2.6
& 98.6 $\pm$ 0.6 & 94.4 $\pm$ 4.6 \\

Lorenz 
& \ding{55} & 1.0
& 93.5 $\pm$ 2.1 & 75.9 $\pm$ 6.6
& 94.4 $\pm$ 1.1 & 82.3 $\pm$ 3.8 \\

FitzHugh--Nagumo 
& \ding{51} & 0.2
& 99.7 $\pm$ 0.1 & 93.8 $\pm$ 0.8
& 99.7 $\pm$ 0.1 & 89.1 $\pm$ 2.9 \\

Winner--Take--All 
& \ding{51} & 0.2
& 95.1 $\pm$ 1.7 & 70.0 $\pm$ 5.4
& 97.4 $\pm$ 0.9 & 72.4 $\pm$ 3.1 \\

Double--Well 
& \ding{51} & 0.2
& 99.3 $\pm$ 0.5 & 93.8 $\pm$ 4.0
& 99.3 $\pm$ 0.4 & 92.7 $\pm$ 1.0 \\

Stuart--Landau 
& \ding{55} & 0.2
& 96.6 $\pm$ 0.8 & 78.4 $\pm$ 4.8
& 98.0 $\pm$ 1.4 & 83.1 $\pm$ 2.2 \\

Heteroclinic 
& \ding{51} & 0.2
& 99.6 $\pm$ 0.2 & 97.5 $\pm$ 0.8
& 99.6 $\pm$ 0.1 & 84.1 $\pm$ 5.5 \\
\midrule
Average 
& & 
& 97.6 $\pm$ 2.6 & 87.4 $\pm$ 11.2 
& 98.0 $\pm$ 2.0 & 85.4 $\pm$ 7.8 \\
\bottomrule
\end{tabular*}
\label{tab:r2_scores}
\end{table}

\begin{table}[tb]
\centering
\renewcommand{\arraystretch}{1.2}
\setlength{\tabcolsep}{8pt}
\caption{\textbf{Comparison with DYNCL on unforced dynamical systems.} We compare our approach against DYNCL \citep{laiz2025} on the two unforced systems in our benchmark, under Gaussian ($\sigma=1.0$ for Lorenz, $\sigma=0.2$ for Stuart-Landau) and Poisson observation noise. DYNCL assumes clean nonlinear observations; our multi-view formulation explicitly handles observation noise. All values are mean $\pm$ std over 3 runs. For $\mathrm{dyn}R^2$ we reported $\max \left(0, \mathrm{dynR^2} \right)$.}
\begin{tabular*}{\textwidth}{@{\extracolsep{\fill}} lcc ccc}
\toprule
& & \multicolumn{2}{c}{Gaussian noise} & \multicolumn{2}{c}{Poisson noise} \\
\cmidrule(lr){3-4}\cmidrule(lr){5-6}
Dyn. System 
& Method 
& $\%\ R^2\ (\uparrow)$ 
& $\%\ \mathrm{dyn}R^2\ (\uparrow)$
& $\%\ R^2\ (\uparrow)$ 
& $\%\ \mathrm{dyn}R^2\ (\uparrow)$ \\
\midrule

Lorenz 
& \textsc{dyncl}
& 71.6 $\pm$ 9.8 & 33.5 $\pm$ 12.3
& 90.1 $\pm$ 2.2 & 0.0 $\pm$ 0.0 \\
& \textsc{dysco} [Ours]
& \textbf{93.5} $\pm$ 2.1 & \textbf{75.9} $\pm$ 6.6
& \textbf{94.4} $\pm$ 1.1 & \textbf{82.3} $\pm$ 3.8 \\

\midrule

Stuart--Landau 
& \textsc{dyncl}
& \textbf{99.8} $\pm$ 0.1 & 12.9 $\pm$ 6.9
& 83.1 $\pm$ 0.8 & 0.0 $\pm$ 0.0 \\
& \textsc{dysco} [Ours]
& 96.6 $\pm$ 0.8 & \textbf{78.4} $\pm$ 4.8
& \textbf{98.0} $\pm$ 1.4 & \textbf{83.1} $\pm$ 2.2 \\

\bottomrule
\end{tabular*}
\label{tab:dyncl_comparisons}
\end{table}

\begin{figure}[tb]
    \centering
    \includegraphics[width=\textwidth]{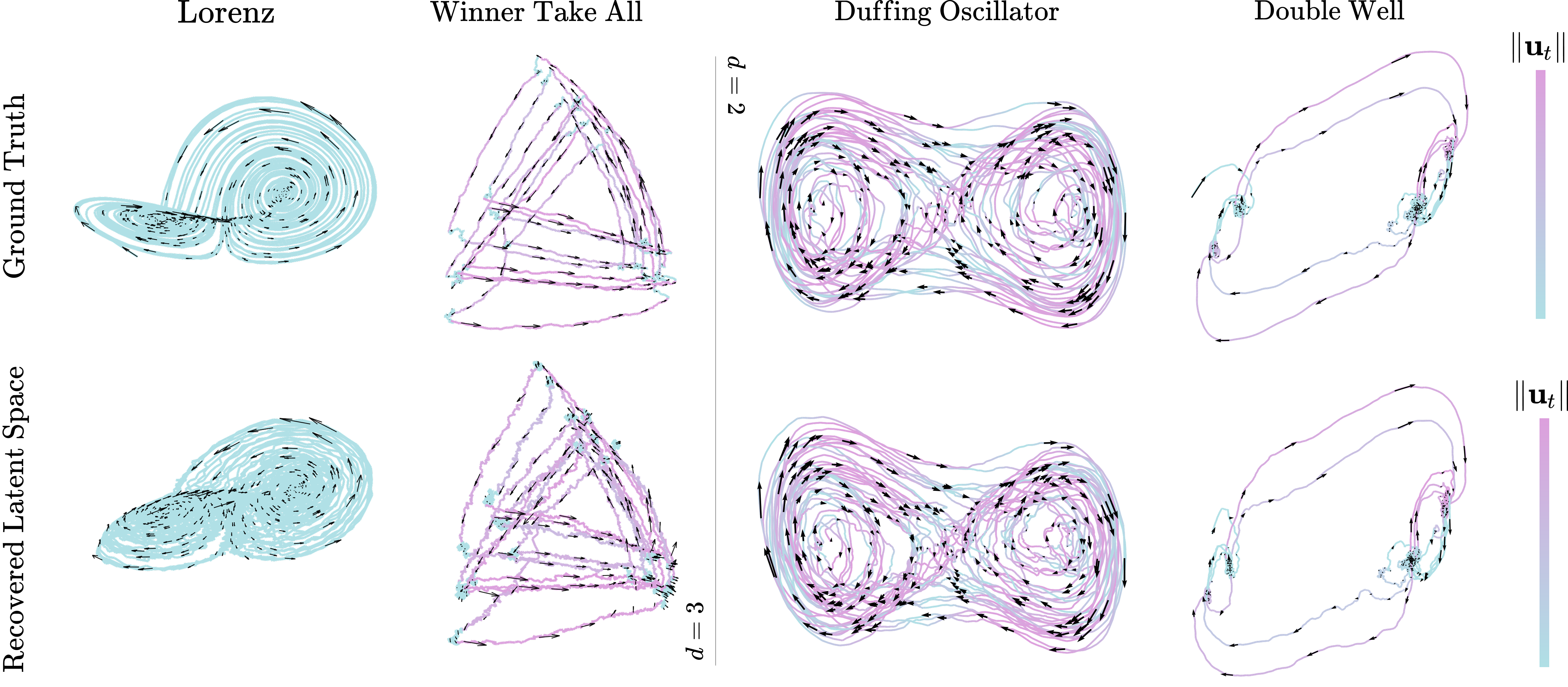}
    \caption{\textbf{Dynamical system recovery via multi-view contrastive learning}. Example phase space portraits of different dynamical systems (top-row) with corresponding affine-aligned recovered systems (bottom row) for either $d=3$ (left) or $d=2$ (right). Dynamical system trajectories are color-coded based on the external forcing magnitude $\parallel \bm u_t \parallel$. All reported systems are observed via a Poisson observation channel.}
    \label{fig:example_dynamical_systems}
\end{figure}

\textbf{Dynamical Systems} In this section we report the identification results for several different dynamical systems. For a complete formal description of the dynamical systems considered in this study we refer to Appendix \ref{sec:appendix_dyn_sys}. In Table \ref{tab:r2_scores} we have reported the results for both $R^2$ and $\mathrm{dyn}R^2$ metrics for the dynamical systems explored in this work, while Figure \ref{fig:example_dynamical_systems} reports some phase-portrait visual examples. Indeed, for a broad class of systems, spanning different dynamical regimes, external-forcing and noise configurations, our algorithm consistently achieves very high latent trajectory accuracies and good-to-very-good flow field accuracies. We note how even for the Winner-Take-All system where the $\mathrm{dyn}R^2$ metric shows average results, the phase portrait (Figure \ref{fig:example_dynamical_systems}, second column) in fact shows excellent agreement with the ground truth. We take this fact to indicate how the metric is potentially affected by the many slow-velocity attractors, where the flow field strength gets overshadowed by the system and observational noise. To assess the contribution of multi-view consistency, we compare against \textsc{DYNCL} \citep{laiz2025} on the two unforced systems in our benchmark\footnote{The publicly available implementation of \textsc{DYNCL} currently does not support external forcing terms}. \textsc{DYNCL} assumes clean nonlinear observations; in our noisy-observation setting, this assumption is violated. As Table \ref{tab:dyncl_comparisons} shows, our system consistently achieves good flow-field identification while \textsc{DYNCL} fails in this regard. The gap is particularly marked under Poisson noise, which maximally violates the noiseless-observation assumption. These results confirm that multi-view consistency is one critical mechanism enabling identification under realistic noisy observations.

\begin{figure}[tb]
    \centering
    \includegraphics[width=\textwidth]{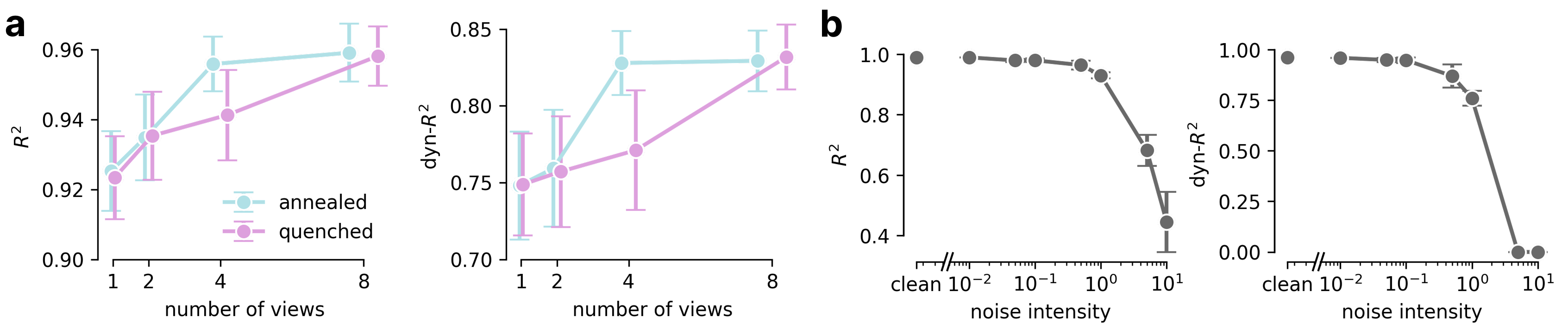}
    \caption{\textbf{Ablation results}. (\textbf{a}) Results for impact of available number of views on the final trajectory (left) and flow-field (right) model performances for the Lorenz dynamical system (observed via a Gaussian channel with standard deviation $\sigma = 1$) and for the two noise condition annealed (cyan) and quenched (plum). Markers are averages over $3$ repetitions and error bars are the standard error of the mean. (\textbf{b}) Impact of the observation noise intensity $\sigma$ on model performances for the Lorenz dynamical systems observed via a Gaussian channel of increasing variance $\sigma^2$ and for $| V |=2$. The clean condition corresponds to the case $\sigma = 0$ (\emph{i.e.} no observational noise).}
    \label{fig:ablation_results}
\end{figure}

\textbf{Symbolic recovery} While Theorem \ref{thm:identifiability_theorem} guarantees identification of the ground-truth dynamical system up to an affine transformation, exact symbolic recovery is precluded by this gauge freedom. The coefficient representation $\bm \Theta$ of the latent flow field is gauge-dependent, since rotations or shifts in latent space mix the components of $\bm \Theta$ in a structured way. For the practically important case of a polynomial basis of degree $\delta$, however, the basis is closed under affine transformations $\bm x_t' =\mathcal{A}\bm x_t \equiv L\bm x_t + \bm b$, and the induced action on the coefficients can be written explicitly:
\begin{equation}
    \bm \Theta' = L
    \bm \Theta
    T_{\mathcal A^{-1}}^{(\delta)},
\label{eq:coefficient_change_basis}
\end{equation}
where $T^\delta_{\mathcal{A}^{-1}}$ is a coefficient transport matrix defined via the relation:
\begin{equation}
    \Xi_\delta \left(L^{-1}\left( \bm x_t - \bm b \right) \right) = T^\delta_{\mathcal{A}^{-1}} \Xi_\delta \left( \bm x_t \right).
\label{eq:coefficient_transport_matrix}
\end{equation}
This characterization defines the orbit of valid coefficient representations and opens the door to downstream symbolic regression:
under the assumption that the underlying system satisfies \emph{symbolic simplicity} \citep{bruntonDiscoveringGoverningEquations2016}, we can attempt symbolic identification by seeking the sparsest representative within the orbit $\bm \Theta^\star$:
\begin{equation}
    \bm \Theta^\star = \arg\min_{L, \bm b} \parallel L \bm  \Theta T^{(\delta)}_{\mathcal{A}^{-1}} \parallel_0.
\label{eq:sparsity_maximisation}
\end{equation}
We offer a proof of concept application of this technique in Appendix \ref{sec:appendix_symbolic_recovery_affine_orbits}.

\textbf{Ablation results} We investigated the properties of our learning framework via two ablation studies. We first gauged the impact of the number of total available views $|V|$ on the final model performances. While Theorem \ref{thm:identifiability_theorem} guarantees identifiability asymptotically for $ V  \to \infty$, in practice we intuitively expected that a richer set of independent views would allow our model to achieve progressively better results. We focused on the Lorenz system and trained the same encoder and dynamics model on a fixed dataset of latent trajectories for a progressively increasing number of available views $a \in \left\{1, 2, 4, 8 \right\}$. We explored both possible noise conditions: \emph{annealed} and \emph{quenched}, corresponding, respectively, to the case where the observation noise for a given latent state $\bm x_t$ is resampled at every training iteration or not. The quenched configuration we argue is conceptually closer to the experimental reality where data acquisition is performed once, yielding frozen noise realizations. We performed our experiment for a Gaussian observation model with noise intensity $\sigma = 1$ and report our results in Figure \ref{fig:ablation_results}a. Indeed we observed that model performance improved monotonically as a function of the number of views, with the annealed configuration offering minor performance advantages for small numbers of views, which is expected due to the richer resampling of the noise. Interestingly, even with a single available view, the model achieved good results: we speculate that for the particular case of a chaotic attractor where the system explores arbitrarily close trajectories in phase space, even a single view effectively allows for dense resampling of neighboring latent states.

We then explored how robust our system was for an increasingly noisy observation channel. We fixed $ V  = 2$ and trained our model on the Lorenz system with $\bm \xi_t^a \sim \mathcal{N}(0, \sigma^2\mathrm{Id})$ Gaussian, for increasing noise intensity $\sigma$. Our results are showcased in Figure \ref{fig:ablation_results}b, where the model consistently achieved good performances up to $\sigma^\ast \simeq 1$, which translates to a threshold signal-to-noise ratio of $\mathrm{SNR}^\ast \simeq 4 \mathrm{dB}$. Interestingly, Figure \ref{fig:ablation_results}b highlights a small region where trajectory recovery is still possible, but the dynamics is not. This observation is consistent with prior work \citep{laiz2025} which illustrated how, when noise dominates the dynamics, even simple contrastive objectives can recover meaningful latent structure while explicit dynamics modeling fails. This delineation helps clarify when standard contrastive learning suffices and when more structured approaches, such as ours, are required. Lastly, we explored the impact of the integration horizon $\kappa$ on model performance: results for this analysis are reported in Appendix \ref{sec:appendix_ablation_integration_horizon}.

\vspace{-6pt}
\section{Discussion}
\label{sec:discussion}
\vspace{-2pt}
In this work we have introduced a novel contrastive-based inference algorithm for the (symbolic) identification of latent dynamical systems under noisy observations. We showed how our model consistently achieves high quality recovery of both latent trajectories and dynamics under significant observational noise. From a theoretical perspective, our results reinforce and extend the emerging view that contrastive learning implicitly performs inversion of the data-generating process \citep{zimmermannContrastiveLearningInverts2021}. Compared to prior frameworks such as \textsc{dyncl} \citep{laiz2025}, our approach demonstrates that identifiability can be retained even under noisy nonlinear observation channels by leveraging the implicit denoising constraint imposed by multi-view consistency, thereby relaxing the idealized observation assumptions typically required in theory. Another important implication concerns the interplay between representation learning and scientific modeling. We opted for explicit symbolic modeling of the dynamics. Our results suggest that one can recover models that are both expressive and interpretable. In this sense, our framework provides a bridge between self-supervised learning and symbolic regression. 

We argue that our setup is relevant for neuroscience, where measurements are inherently noisy but often trial-structured, making the assumption of repeated observations of a shared latent state natural. Achieving strong identifiability in experimentally realistic conditions thus offers a precious tool in the pursuit of understanding the algorithmic function of neural circuits via the lens of dynamical systems \citep{vyasComputationNeuralPopulation2020}. This, in turn, suggests that structured contrastive learning approaches might offer a viable path toward more interpretable, dynamics-level descriptions of neural computation.

\textbf{Limitations} We benchmark our method on simulated data, where full control over the ground-truth enables rigorous evaluation. However, real-world applicability remains to be demonstrated, particularly in neuroscience. As a partial step in this direction, we considered Poisson observation noise, a common model of neural spiking activity, and found that our method remains effective beyond the assumptions of our theoretical analysis, suggesting broader practical applicability. A second limitation concerns the choice of latent dimensionality $d$ and symbolic basis $\Xi_f$, which are assumed to be known. In practice, both quantities must be estimated from data, and poor choices may hinder accurate system identification (\emph{e.g.}, if the chosen basis is not expressive enough to capture the underlying dynamics). Nevertheless, this limitation can be mitigated in practice. Estimating the intrinsic dimensionality of data is a well-studied problem, with a range of established estimators available \citep{levina2004maximum, facco2017estimating, muratore2022prune}. Moreover, polynomial bases provide a flexible approximation class (\emph{e.g.}, via Taylor expansions) while allowing explicit control over model complexity, making them a practical choice in many settings. Lastly, our current implementation disregards the contribution of the scalar potential $\alpha$: this term can potentially become important for systems with non-uniform marginals, for example for systems with different fast-slow regimes. Future work should explore how to learn this term jointly and effectively.

\clearpage
\bibliography{bibliography}

@inproceedings{laiz2025,
  title = {Self-Supervised Contrastive Learning Performs Non-Linear System Identification},
  booktitle = {ICLR},
  author = {Laiz, Rodrigo Gonzalez and Schmidt, Tobias and Schneider, Steffen},
  year = 2025,
  pages = {38},
  langid = {english},
}

@article{mathis2026joint,
  author    = {Mathis, M. W. and Mathis, A.},
  title     = {Joint modelling of brain and behaviour dynamics with artificial intelligence},
  journal   = {Nature Reviews Neuroscience},
  volume    = {27},
  pages     = {87--100},
  year      = {2026},
  doi       = {10.1038/s41583-025-00996-1},
  url       = {https://doi.org/10.1038/s41583-025-00996-1}
}

@article{bruntonDiscoveringGoverningEquations2016,
  title = {Discovering Governing Equations from Data by Sparse Identification of Nonlinear Dynamical Systems},
  author = {Brunton, Steven L. and Proctor, Joshua L. and Kutz, J. Nathan},
  year = 2016,
  month = apr,
  journal = {Proceedings of the National Academy of Sciences},
  volume = {113},
  number = {15},
  pages = {3932--3937},
  issn = {0027-8424, 1091-6490},
  doi = {10.1073/pnas.1517384113},
  urldate = {2025-10-30},
  langid = {english}
}

@misc{halvaDisentanglingIdentifiableFeatures2021,
  title = {Disentangling {{Identifiable Features}} from {{Noisy Data}} with {{Structured Nonlinear ICA}}},
  author = {H{\"a}lv{\"a}, Hermanni and Corff, Sylvain Le and Leh{\'e}ricy, Luc and So, Jonathan and Zhu, Yongjie and Gassiat, Elisabeth and Hyvarinen, Aapo},
  year = 2021,
  month = oct,
  number = {arXiv:2106.09620},
  eprint = {2106.09620},
  primaryclass = {stat},
  publisher = {arXiv},
  doi = {10.48550/arXiv.2106.09620},
  urldate = {2026-02-17},
  archiveprefix = {arXiv}
}

@misc{zimmermannContrastiveLearningInverts2022,
  title = {Contrastive {{Learning Inverts}} the {{Data Generating Process}}},
  author = {Zimmermann, Roland S. and Sharma, Yash and Schneider, Steffen and Bethge, Matthias and Brendel, Wieland},
  year = 2022,
  month = apr,
  number = {arXiv:2102.08850},
  eprint = {2102.08850},
  primaryclass = {cs},
  publisher = {arXiv},
  doi = {10.48550/arXiv.2102.08850},
  urldate = {2025-05-25},
  archiveprefix = {arXiv}
}

@inproceedings{hyvarinenUnsupervisedFeatureExtraction2016,
  title = {Unsupervised {{Feature Extraction}} by {{Time-Contrastive Learning}} and {{Nonlinear ICA}}},
  booktitle = {Advances in {{Neural Information Processing Systems}}},
  author = {Hyvarinen, Aapo and Morioka, Hiroshi},
  year = 2016,
  volume = {29},
  publisher = {Curran Associates, Inc.},
  urldate = {2026-04-21}
}

@article{pandarinathInferringSingletrialNeural2018,
  title = {Inferring Single-Trial Neural Population Dynamics Using Sequential Auto-Encoders},
  author = {Pandarinath, Chethan and O'Shea, Daniel J. and Collins, Jasmine and Jozefowicz, Rafal and Stavisky, Sergey D. and Kao, Jonathan C. and Trautmann, Eric M. and Kaufman, Matthew T. and Ryu, Stephen I. and Hochberg, Leigh R. and Henderson, Jaimie M. and Shenoy, Krishna V. and Abbott, L. F. and Sussillo, David},
  year = 2018,
  month = oct,
  journal = {Nature Methods},
  volume = {15},
  number = {10},
  pages = {805--815},
  publisher = {Nature Publishing Group},
  issn = {1548-7105},
  doi = {10.1038/s41592-018-0109-9},
  urldate = {2025-05-13},
  copyright = {2018 The Author(s), under exclusive licence to Springer Nature America, Inc.},
  langid = {english}
}

@article{freeman1950transformations,
  title={Transformations related to the angular and the square root},
  author={Freeman, Murray F and Tukey, John W},
  journal={The annals of mathematical statistics},
  pages={607--611},
  year={1950},
  publisher={JSTOR}
}

@misc{jordan2024muon,
  author       = {Keller Jordan and Yuchen Jin and Vlado Boza and You Jiacheng and
                  Franz Cesista and Laker Newhouse and Jeremy Bernstein},
  title        = {Muon: An optimizer for hidden layers in neural networks},
  year         = {2024},
  url          = {https://kellerjordan.github.io/posts/muon/}
}

@article{loshchilov2017decoupled,
  title={Decoupled weight decay regularization},
  author={Loshchilov, Ilya and Hutter, Frank},
  journal={arXiv preprint arXiv:1711.05101},
  year={2017}
}

@misc{torchdiffeq,
	author={Chen, Ricky T. Q.},
	title={torchdiffeq},
	year={2018},
	url={https://github.com/rtqichen/torchdiffeq},
}

@article{oord2018representation,
  title={Representation learning with contrastive predictive coding},
  author={Oord, Aaron van den and Li, Yazhe and Vinyals, Oriol},
  journal={arXiv preprint arXiv:1807.03748},
  year={2018}
}

@article{schneiderLearnableLatentEmbeddings2023,
  title = {Learnable Latent Embeddings for Joint Behavioural and Neural Analysis},
  author = {Schneider, Steffen and Lee, Jin Hwa and Mathis, Mackenzie Weygandt},
  year = 2023,
  month = may,
  journal = {Nature},
  volume = {617},
  number = {7960},
  pages = {360--368},
  publisher = {Nature Publishing Group},
  issn = {1476-4687},
  doi = {10.1038/s41586-023-06031-6},
  urldate = {2024-11-05},
  copyright = {2023 The Author(s)},
  langid = {english}
}

@misc{dascoliODEFormerSymbolicRegression2023,
  title = {{{ODEFormer}}: {{Symbolic Regression}} of {{Dynamical Systems}} with {{Transformers}}},
  shorttitle = {{{ODEFormer}}},
  author = {{d'Ascoli}, St{\'e}phane and Becker, S{\"o}ren and Mathis, Alexander and Schwaller, Philippe and Kilbertus, Niki},
  year = 2023,
  month = oct,
  number = {arXiv:2310.05573},
  eprint = {2310.05573},
  publisher = {arXiv},
  urldate = {2024-11-05},
  archiveprefix = {arXiv}
}

@article{championDatadrivenDiscoveryCoordinates2019,
  title = {Data-Driven Discovery of Coordinates and Governing Equations},
  author = {Champion, Kathleen and Lusch, Bethany and Kutz, J. Nathan and Brunton, Steven L.},
  year = 2019,
  month = nov,
  journal = {Proceedings of the National Academy of Sciences},
  volume = {116},
  number = {45},
  pages = {22445--22451},
  publisher = {Proceedings of the National Academy of Sciences},
  doi = {10.1073/pnas.1906995116},
  urldate = {2025-07-03}
}

@inproceedings{hyvarinenNonlinearICAUsing2019,
  title = {Nonlinear {{ICA Using Auxiliary Variables}} and {{Generalized Contrastive Learning}}},
  booktitle = {Proceedings of the {{Twenty-Second International Conference}} on {{Artificial Intelligence}} and {{Statistics}}},
  author = {Hyvarinen, Aapo and Sasaki, Hiroaki and Turner, Richard},
  year = 2019,
  month = apr,
  pages = {859--868},
  publisher = {PMLR},
  issn = {2640-3498},
  urldate = {2026-04-27},
  langid = {english}
}

@article{hyvarinenNonlinearIndependentComponent2023,
  title = {Nonlinear Independent Component Analysis for Principled Disentanglement in Unsupervised Deep Learning},
  author = {Hyv{\"a}rinen, Aapo and Khemakhem, Ilyes and Morioka, Hiroshi},
  year = 2023,
  month = oct,
  journal = {Patterns},
  volume = {4},
  number = {10},
  publisher = {Elsevier},
  issn = {2666-3899},
  doi = {10.1016/j.patter.2023.100844},
  urldate = {2026-02-25},
  langid = {english},
  pmid = {37876900}
}

@inproceedings{zimmermannContrastiveLearningInverts2021,
  title = {Contrastive {{Learning Inverts}} the {{Data Generating Process}}},
  booktitle = {Proceedings of the 38th {{International Conference}} on {{Machine Learning}}},
  author = {Zimmermann, Roland S. and Sharma, Yash and Schneider, Steffen and Bethge, Matthias and Brendel, Wieland},
  year = 2021,
  month = jul,
  pages = {12979--12990},
  publisher = {PMLR},
  issn = {2640-3498},
  urldate = {2025-05-23},
  langid = {english}
}

@misc{shahImprovedInterpretabilityLFADS2025,
  title = {Improved Interpretability in {{LFADS}} Models Using a Learned, Context-Dependent per-Trial Bias},
  author = {Shah, Nishal P. and Krasa, Benyamin Abramovich and Kunz, Erin and Hahn, Nick and Kamdar, Foram and Avansino, Donald and Hochberg, Leigh R. and Henderson, Jaimie M. and Sussillo, David},
  year = 2025,
  month = oct,
  primaryclass = {New Results},
  pages = {2025.10.03.680303},
  publisher = {bioRxiv},
  issn = {2692-8205},
  doi = {10.1101/2025.10.03.680303},
  urldate = {2025-10-21},
  archiveprefix = {bioRxiv},
  chapter = {New Results},
  copyright = {\copyright{} 2025, Posted by Cold Spring Harbor Laboratory. This pre-print is available under a Creative Commons License (Attribution 4.0 International), CC BY 4.0, as described at http://creativecommons.org/licenses/by/4.0/},
  langid = {english}
}

@article{pandarinathInferringSingletrialNeural2018a,
  title = {Inferring Single-Trial Neural Population Dynamics Using Sequential Auto-Encoders},
  author = {Pandarinath, Chethan and O'Shea, Daniel J. and Collins, Jasmine and Jozefowicz, Rafal and Stavisky, Sergey D. and Kao, Jonathan C. and Trautmann, Eric M. and Kaufman, Matthew T. and Ryu, Stephen I. and Hochberg, Leigh R. and Henderson, Jaimie M. and Shenoy, Krishna V. and Abbott, L. F. and Sussillo, David},
  year = 2018,
  month = oct,
  journal = {Nature Methods},
  volume = {15},
  number = {10},
  pages = {805--815},
  publisher = {Nature Publishing Group},
  issn = {1548-7105},
  doi = {10.1038/s41592-018-0109-9},
  urldate = {2026-04-27},
  copyright = {2018 The Author(s), under exclusive licence to Springer Nature America, Inc.},
  langid = {english}
}

@article{klishinStatisticalMechanicsDynamical2025,
  title = {Statistical Mechanics of Dynamical System Identification},
  author = {Klishin, Andrei A. and Bakarji, Joseph and Kutz, J. Nathan and Manohar, Krithika},
  year = 2025,
  month = aug,
  journal = {Physical Review Research},
  volume = {7},
  number = {3},
  pages = {033181},
  publisher = {American Physical Society},
  doi = {10.1103/4d98-tdlp},
  urldate = {2025-08-26}
}

@inproceedings{kimInferringLatentDynamics2021,
  title = {Inferring {{Latent Dynamics Underlying Neural Population Activity}} via {{Neural Differential Equations}}},
  booktitle = {Proceedings of the 38th {{International Conference}} on {{Machine Learning}}},
  author = {Kim, Timothy D. and Luo, Thomas Z. and Pillow, Jonathan W. and Brody, Carlos D.},
  year = 2021,
  month = jul,
  pages = {5551--5561},
  publisher = {PMLR},
  issn = {2640-3498},
  urldate = {2026-04-27},
  langid = {english}
}

@article{bastiani2025diffusion,
  title={Diffusion-Based Symbolic Regression},
  author={Bastiani, Zachary and Kirby, Robert M and Hochhalter, Jacob and Zhe, Shandian},
  journal={arXiv preprint arXiv:2505.24776},
  year={2025}
}

@inproceedings{henaff2020data,
  title={Data-efficient image recognition with contrastive predictive coding},
  author={Henaff, Olivier},
  booktitle={International conference on machine learning},
  pages={4182--4192},
  year={2020},
  organization={PMLR}
}

@article{schneider2019wav2vec,
  title={wav2vec: Unsupervised pre-training for speech recognition},
  author={Schneider, Steffen and Baevski, Alexei and Collobert, Ronan and Auli, Michael},
  journal={arXiv preprint arXiv:1904.05862},
  year={2019}
}

@article{facco2017estimating,
  title={Estimating the intrinsic dimension of datasets by a minimal neighborhood information},
  author={Facco, Elena and d’Errico, Maria and Rodriguez, Alex and Laio, Alessandro},
  journal={Scientific reports},
  volume={7},
  number={1},
  pages={12140},
  year={2017},
  publisher={Nature Publishing Group UK London}
}

@article{levina2004maximum,
  title={Maximum likelihood estimation of intrinsic dimension},
  author={Levina, Elizaveta and Bickel, Peter},
  journal={Advances in neural information processing systems},
  volume={17},
  year={2004}
}

@article{marr1982vision,
  title={Vision: A computational investigation into the human representation and processing of visual information},
  author={Marr, David},
  journal={(No Title)},
  year={1982}
}

@misc{versteegComputationthroughDynamicsBenchmarkSimulated2025,
  title = {Computation-through-{{Dynamics Benchmark}}: {{Simulated}} Datasets and Quality Metrics for Dynamical Models of Neural Activity},
  shorttitle = {Computation-through-{{Dynamics Benchmark}}},
  author = {Versteeg, Christopher and McCart, Jonathan D. and Ostrow, Mitchell and Zoltowski, David M. and Washington, Clayton B. and Driscoll, Laura and Codol, Olivier and Michaels, Jonathan A. and Linderman, Scott W. and Sussillo, David and Pandarinath, Chethan},
  year = 2025,
  month = feb,
  primaryclass = {New Results},
  pages = {2025.02.07.637062},
  publisher = {bioRxiv},
  doi = {10.1101/2025.02.07.637062},
  urldate = {2025-02-14},
  archiveprefix = {bioRxiv},
  chapter = {New Results},
  copyright = {\copyright{} 2025, Posted by Cold Spring Harbor Laboratory. This pre-print is available under a Creative Commons License (Attribution 4.0 International), CC BY 4.0, as described at http://creativecommons.org/licenses/by/4.0/},
  langid = {english}
}

@article{vyasComputationNeuralPopulation2020,
  title = {Computation {{Through Neural Population Dynamics}}},
  author = {Vyas, Saurabh and Golub, Matthew D. and Sussillo, David and Shenoy, Krishna V.},
  year = 2020,
  month = jul,
  journal = {Annual Review of Neuroscience},
  volume = {43},
  number = {1},
  pages = {249--275},
  issn = {0147-006X, 1545-4126},
  doi = {10.1146/annurev-neuro-092619-094115},
  urldate = {2025-03-19},
  langid = {english}
}

@article{sussilloOpeningBlackBox2013,
  title = {Opening the {{Black Box}}: {{Low-Dimensional Dynamics}} in {{High-Dimensional Recurrent Neural Networks}}},
  shorttitle = {Opening the {{Black Box}}},
  author = {Sussillo, David and Barak, Omri},
  year = 2013,
  month = mar,
  journal = {Neural Computation},
  volume = {25},
  number = {3},
  pages = {626--649},
  issn = {0899-7667},
  doi = {10.1162/NECO_a_00409},
  urldate = {2026-04-09}
}

@article{mastrogiuseppe2018linking,
  title={Linking connectivity, dynamics, and computations in low-rank recurrent neural networks},
  author={Mastrogiuseppe, Francesca and Ostojic, Srdjan},
  journal={Neuron},
  volume={99},
  number={3},
  pages={609--623},
  year={2018},
  publisher={Elsevier}
}

@article{muratore2022prune,
  title={Prune and distill: similar reformatting of image information along rat visual cortex and deep neural networks},
  author={Muratore, Paolo and Tafazoli, Sina and Piasini, Eugenio and Laio, Alessandro and Zoccolan, Davide},
  journal={Advances in Neural Information Processing Systems},
  volume={35},
  pages={30206--30218},
  year={2022}
}

@misc{yangDeepHoyerLearningSparser2020,
  title = {{{DeepHoyer}}: {{Learning Sparser Neural Network}} with {{Differentiable Scale-Invariant Sparsity Measures}}},
  shorttitle = {{{DeepHoyer}}},
  author = {Yang, Huanrui and Wen, Wei and Li, Hai},
  year = 2020,
  month = jan,
  number = {arXiv:1908.09979},
  eprint = {1908.09979},
  primaryclass = {cs},
  publisher = {arXiv},
  doi = {10.48550/arXiv.1908.09979},
  urldate = {2025-08-25},
  archiveprefix = {arXiv}
}

@misc{louizosLearningSparseNeural2018a,
  title = {Learning {{Sparse Neural Networks}} through L0 {{Regularization}}},
  author = {Louizos, Christos and Welling, Max and Kingma, Diederik P.},
  year = 2018,
  month = jun,
  number = {arXiv:1712.01312},
  eprint = {1712.01312},
  primaryclass = {stat},
  publisher = {arXiv},
  doi = {10.48550/arXiv.1712.01312},
  urldate = {2025-07-30},
  archiveprefix = {arXiv}
}

@misc{kolbDifferentiableSparsity$D$Gating2025,
  title = {Differentiable {{Sparsity}} via D-Gating: {{Simple}} and {{Versatile Structured Penalization}}},
  shorttitle = {Differentiable {{Sparsity}} via \${{D}}\$-{{Gating}}},
  author = {Kolb, Chris and Frost, Laetitia and Bischl, Bernd and R{\"u}gamer, David},
  year = 2025,
  month = oct,
  number = {arXiv:2509.23898},
  eprint = {2509.23898},
  primaryclass = {cs},
  publisher = {arXiv},
  doi = {10.48550/arXiv.2509.23898},
  urldate = {2025-12-04},
  archiveprefix = {arXiv}
}

@inproceedings{wang2020understanding,
  title={Understanding contrastive representation learning through alignment and uniformity on the hypersphere},
  author={Wang, Tongzhou and Isola, Phillip},
  booktitle={International conference on machine learning},
  pages={9929--9939},
  year={2020},
  organization={PMLR}
}

\clearpage
\appendix
\begin{center}
    {\huge \bfseries Supplementary Material\par}
\end{center}
\vspace{1em}
\section{Proof of Theoretical Results}
\label{sec:appendix_theorem_proofs}

In this section we state the claim of the main theoretical result and provide a full proof. We then provide two remarks discussing the practical implementation choices for \textsc{DYSCO}.

\addtocounter{theorem}{-1}
\begin{theorem}[Multi-view contrastive estimation of noisy dynamics]
\label{thm:appendix_main_theorem}
Consider the latent Markov process on \(\mathbb R^d\)
\[
    \bm x_{t+1}=f(\bm x_t)+\bm\varepsilon_t,
    \qquad
    \bm\varepsilon_t\overset{\mathrm{iid}}{\sim}\mathcal N(0,\Sigma),
    \qquad
    \Sigma\succ0,
\]
where \(f:\mathbb R^d\to\mathbb R^d\) is a \(C^2\) diffeomorphism. Let \(\mathcal U\subseteq\mathbb R^d\) be a connected open region on which identification is claimed. Let \(q\) denote the marginal density of \(\bm x_t\), and assume that the clean density ratio
\[
    (\bm x,\bm x')\mapsto \frac{p(\bm x'\mid \bm x)}{q(\bm x')}
\]
is finite and continuous on \(\mathcal U\times\mathcal U\). Suppose that for each \(t\) we observe \(V\) independent noisy views
\[
    \bm y_t^a=g(\bm x_t)+\bm \xi_t^a,
    \qquad a=1,\dots,V,
\]
where \(g:\mathbb R^d\to\mathbb R^D\), \(D\ge d\), is a \(C^1\) bi-Lipschitz embedding with lower bi-Lipschitz constant \(m_g>0\). Assume that the centered noises \(\bm\xi_t^a-\bm\mu_\xi\) are independent, mean-zero, sub-Gaussian random vectors in \(\mathbb R^D\) with sub-Gaussian parameter \(\sigma_\xi^2\), uniformly in \(t\) and \(a\), where \(\bm\mu_\xi=\mathbb E[\bm\xi_t^a]\).

Consider the all-views DYSCO population model with encoder \(H_V:(\mathbb R^D)^V\to\mathbb R^d\), dynamics \(\hat f_V\), and potential \(\alpha_V\), with rollout score
\begin{equation}
    \Psi_V^{k_1,k_2}(\bm y_t,\bm y'_\tau)
    =
    -
    \left\|
        \Phi_{\hat f_V}^{\pm k_1}(H_V(\bm y_t))
        -
        \Phi_{\hat f_V}^{\mp k_2}(H_V(\bm y'_\tau))
    \right\|^2
    -
    \alpha_V(H_V(\bm y'_\tau)).
\end{equation}
Let
\begin{equation}
    \mathcal L_V[\Psi_V]
    =
    \sum_{k_1,k_2=0}^{\kappa}
    \lambda^{k_1,k_2}
    \mathcal L_V^{k_1,k_2}[\Psi_V].
\end{equation}
Assume the following.
\begin{enumerate}
    \item[\textnormal{(A1)}] The multi-horizon objective is jointly realizable: there exists a single tuple \((H_V,\hat f_V,\alpha_V)\) attaining the infimum of every active constituent \(\mathcal L_V^{k_1,k_2}\). Moreover, the forward one-step constituent is active, \(\lambda^{1,0}>0\).

    \item[\textnormal{(A2)}] After fixing the affine gauge, consider any sequence of population global minimizers for which, along a subsequence, the learned all-views representation and dynamics have a stable \(C^1\) limit:
    \[
        H_V(\bm y_t^{(V)}) \longrightarrow r(\bm x_t)
        \qquad\text{in probability}
    \]
    for \(r\in C^1\), and \(\hat f_V\to \hat f\) in \(C^1_{\mathrm{loc}}\) on the learned support. The potential class is rich enough to represent the marginal term \(\log q\) on this support.
\end{enumerate}
Then in the limit $V, T \to \infty$ every such limiting minimizer identifies the latent state and the deterministic dynamics up to a common affine indeterminacy. That is, on \(\mathcal U\) there exist \(L\in\mathrm{GL}(d)\) and \(\bm b\in\mathbb R^d\) such that
\begin{equation}
    r(\bm x)=L\bm x+\bm b,
\end{equation}
and
\begin{equation}
    \hat f(\bm z)
    =
    L f\left(L^{-1}(\bm z-\bm b)\right)+\bm b,
    \qquad
    \bm z\in r(\mathcal U).
\end{equation}
\end{theorem}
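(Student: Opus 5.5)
The proof proceeds in three stages: denoising via the multi-view limit, identification of the InfoNCE population optimum, and a rigidity argument that forces affinity.

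\emph{Step 1 (denoising).} I would first argue that the all-views encoder sees, asymptotically, only the clean embedding $g(\bm x_t)$. Because the centered noises are independent, mean-zero and sub-Gaussian with a uniform parameter, the view average $\bar{\bm y}_t^{(V)}=\frac1V\sum_a \bm y_t^a$ satisfies $\|\bar{\bm y}_t^{(V)}-g(\bm x_t)-\bm\mu_\xi\|=O_P(\sigma_\xi\sqrt{D/V})$ by a standard sub-Gaussian concentration bound. Since $g$ is bi-Lipschitz with lower constant $m_g$, the state $\bm x_t$ is then recoverable from the views with error $O_P(V^{-1/2}/m_g)$. It follows that, as $V\to\infty$, the conditional law of $\bm x_t$ given $\bm y_t^{(V)}$ concentrates to a Dirac mass. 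Combined with (A2), which says $H_V(\bm y_t^{(V)})\to r(\bm x_t)$ in probability, this reduces the population objective $\mathcal L_V$ (with $T\to\infty$, so empirical expectations are population ones) to the clean-latent contrastive objective studied by \cite{laiz2025}, now evaluated on $r(\bm x_t)$.

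\emph{Step 2 (optimal critic).} Next I would use the standard InfoNCE fact that the population minimizer of each constituent $\mathcal L^{k_1,k_2}$ equals the log density ratio up to an additive function of the anchor. By (A1), a single tuple attains all infima, including the active $(1,0)$ term. Taking the limit, for $\bm x,\bm x'\in\mathcal U$ this gives
\[
-\|\hat f(r(\bm x))-r(\bm x')\|^2-\alpha(r(\bm x')) = \log p(\bm x'\mid\bm x)-\log q(\bm x')+c(\bm x).
\]
The Gaussian transition gives $\log p(\bm x'\mid \bm x)=-\tfrac12(\bm x'-f(\bm x))^\top\Sigma^{-1}(\bm x'-f(\bm x))+\text{const}$, and (A2) allows $\alpha$ to absorb the $\log q$ term. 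The result is an identity
\[
\|\hat f(r(\bm x))-r(\bm x')\|^2 = \tfrac12\|\Sigma^{-1/2}(\bm x'-f(\bm x))\|^2 + a(\bm x)+ b(\bm x'),
\]
valid on $\mathcal U\times\mathcal U$. Continuity of the density ratio and the $C^1$ regularity from (A2) justify passing these identities to the limit pointwise, not merely almost everywhere.

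\emph{Step 3 (rigidity, the main obstacle).} I would differentiate the identity with respect to $\bm x'$ and then with respect to $\bm x$. The mixed derivative $\partial_{\bm x}\partial_{\bm x'}$ kills $a$ and $b$ and yields
\[
J_r(\bm x')^\top J_{\hat f\circ r}(\bm x) = \tfrac12\Sigma^{-1}J_f(\bm x).
\]
Since $f$ is a diffeomorphism, the right-hand side is invertible. The left-hand side factorizes as a function of $\bm x'$ times a function of $\bm x$, so fixing $\bm x$ shows that $J_r(\bm x')$ is constant on the connected set $\mathcal U$. Hence $r(\bm x)=L\bm x+\bm b$, with $L$ invertible because the product is invertible.

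The delicate points in this step are, first, that the differentiation requires $C^1$ regularity in both arguments, which I would take from (A2), and, second, that the $C^2$ assumption on $f$ is used through the mixed partials. If differentiability of the limit identity is in doubt, my first fallback is a finite-difference (polarization) argument. I would take the expression $E(\bm x,\bm x')$ at four points $\bm x_1,\bm x_2,\bm x_1',\bm x_2'$ and form $E(\bm x_1,\bm x_1')-E(\bm x_1,\bm x_2')-E(\bm x_2,\bm x_1')+E(\bm x_2,\bm x_2')$. The additive terms $a$ and $b$ cancel, and the resulting bilinear identity $\langle \hat f r(\bm x_1)-\hat f r(\bm x_2),\, r(\bm x_1')-r(\bm x_2')\rangle = \tfrac12\langle \Sigma^{-1}(f(\bm x_1)-f(\bm x_2)),\, \bm x_1'-\bm x_2'\rangle$ forces $r$ to be affine without any derivatives.

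\emph{Step 4 (conjugacy).} With $r$ affine, the bilinear identity becomes $L^\top\bigl(\hat f(r(\bm x))-\hat f(r(\bm x_0))\bigr)=\tfrac12\Sigma^{-1}\bigl(f(\bm x)-f(\bm x_0)\bigr)$. This gives $\hat f\circ r = M f + \bm c$ for some matrix $M$ and vector $\bm c$. To obtain the stated form $\hat f\circ r = r\circ f$, i.e.\ $M=L$ and $\bm c=\bm b$, I would compare the squared-norm (diagonal) parts of the identity, which pin $L$ to a $\Sigma^{-1/2}$-whitening up to an orthogonal factor and fix the gauge. Substituting $\bm z=r(\bm x)$ then yields $\hat f(\bm z)=Lf(L^{-1}(\bm z-\bm b))+\bm b$ on $r(\mathcal U)$.
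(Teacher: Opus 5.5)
Your Steps 1--3 follow the paper's proof closely: averaging the views with sub-Gaussian concentration, bi-Lipschitz inversion and posterior concentration, the Bayes-optimal InfoNCE critic with joint realizability reducing everything to the $(1,0)$ term, and the mixed-derivative argument showing $J_r$ is constant on $\mathcal U$. Your four-point polarization fallback is a correct, derivative-free variant of that last step.

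The genuine gap is in Step 4. You wrote the limit identity as $\|\hat f(r(\bm x))-r(\bm x')\|^2=\tfrac12\|\Sigma^{-1/2}(\bm x'-f(\bm x))\|^2+a(\bm x)+b(\bm x')$ with $a$ and $b$ arbitrary. Once you do that, the ``diagonal parts'' carry no information: every term depending only on $\bm x'$ is absorbed into $b(\cdot)$, and every term depending only on $\bm x$ into $a(\cdot)$. Concretely, take any $L\in\mathrm{GL}(d)$, any $\bm c$, and $M=\tfrac12 L^{-\top}\Sigma^{-1}$, with $r(\bm x)=L\bm x+\bm b$ and $\hat f(r(\bm x))=Mf(\bm x)+\bm c$. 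These satisfy your identity with
$a(\bm x)=\|Mf(\bm x)+\bm c-\bm b\|^2-\tfrac12 f(\bm x)^\top\Sigma^{-1}f(\bm x)$ and $b(\bm x')=\|L\bm x'\|^2-2(\bm c-\bm b)^\top L\bm x'-\tfrac12\bm x'^\top\Sigma^{-1}\bm x'$.
So your identity yields only $L^\top M=\tfrac12\Sigma^{-1}$, with $\bm c$ free. It forces neither $M=L$ nor $\bm c=\bm b$. Appealing to ``fixing the gauge'' does not help, because $L$ and $\bm b$ are already determined by $r$.

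The paper closes this by keeping only an anchor term in the limit identity. The potential is taken to absorb exactly $\log q(\bm x')$ (the richness clause of (A2)), which gives
$\|\hat f(r(\bm x))-r(\bm x')\|^2=(f(\bm x)-\bm x')^\top\Lambda(f(\bm x)-\bm x')-c'(\bm x)$,
with $\Lambda=\tfrac12\Sigma^{-1}$ in your normalization. For fixed $\bm x$, both sides are now strictly convex quadratics in $\bm x'$ that differ by a constant. Their unique minimizers must therefore coincide, so $\hat f(r(\bm x))=r(f(\bm x))$.

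Equivalently, in your notation:
\begin{itemize}
\item matching the $\bm x'$-quadratic coefficients gives $L^\top L=\Lambda$, and combined with $L^\top M=\Lambda$ this gives $M=L$ (so your intended whitening conclusion is right, but only for this pinned identity);
\item matching the $\bm x'$-linear coefficients gives $L^\top(\bm c-\bm b)=0$, hence $\bm c=\bm b$.
\end{itemize}

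You need to commit to this pinned form of the potential, with no residual $\bm x'$-only term, before Step 4. Your more general identity in fact shows that with a completely free potential the conjugacy conclusion would not follow.
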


\begin{proof}
We prove the result in five steps.

\paragraph{Step 1: Independent views denoise the observation.}
For fixed \(t\), define the empirical all-views average
\[
    \bar{\bm y}_t^{(V)}
    =
    \frac1V\sum_{v=1}^V\bm y_t^v
    =
    g(\bm x_t)+\frac1V\sum_{v=1}^V\bm\xi_t^v,
\]
and set \(\tilde g(\bm x)=g(\bm x)+\bm\mu_\xi\). Then
\[
    \bar{\bm y}_t^{(V)}-\tilde g(\bm x_t)
    =
    \frac1V\sum_{v=1}^V(\bm\xi_t^v-\bm\mu_\xi).
\]
The right-hand side is sub-Gaussian with parameter \(\sigma_\xi^2/V\). Hence, for a universal constant \(C_0>0\), every \(u>0\), and every fixed \(t\),
\[
    p\left(
        \left\|
            \bar{\bm y}_t^{(V)}-\tilde g(\bm x_t)
        \right\|
        >
        C_0\sigma_\xi\sqrt{\frac{D+u}{V}}
    \right)
    \le e^{-u}.
\]
Equivalently, over any finite trajectory \(t=0,\dots,T\), a union bound gives, with probability at least \(1-\delta\),
\[
    \max_{0\le t\le T}
    \|\bar{\bm y}_t^{(V)}-\tilde g(\bm x_t)\|
    \le
    C_0\sigma_\xi
    \sqrt{\frac{D+\log((T+1)/\delta)}{V}}.
\]
In particular, for each fixed \(t\),
\[
    \mathbb E\|\bar{\bm y}_t^{(V)}-\tilde g(\bm x_t)\|
    \le
    C\sigma_\xi\sqrt{\frac{D}{V}}.
\]

\paragraph{Step 2: The all-views posterior concentrates on the clean latent state.}
Because \(g\) is bi-Lipschitz, \(\tilde g\) has the same lower bi-Lipschitz constant. Define the oracle inverse estimator
\[
    \hat{\bm x}_t^{(V)}
    \in
    \operatorname*{argmin}_{\bm x\in\mathbb R^d}
    \|\bar{\bm y}_t^{(V)}-\tilde g(\bm x)\|.
\]
The same projection argument as in the noiseless inverse problem gives
\[
    \|\hat{\bm x}_t^{(V)}-\bm x_t\|
    \le
    \frac{2}{m_g}
    \|\bar{\bm y}_t^{(V)}-\tilde g(\bm x_t)\|.
\]
Therefore
\[
    \mathbb E\|\hat{\bm x}_t^{(V)}-\bm x_t\|
    \le
    C\frac{\sigma_\xi}{m_g}\sqrt{\frac{D}{V}}
    \longrightarrow 0.
\]
where the \(L^1\) convergence follows directly from the sub-Gaussian bound.

Let \(\Pi_V(\cdot\mid \bm y_t^{(V)})\) be the posterior law of \(\bm x_t\) given all views at time \(t\). Since \(\hat{\bm x}_t^{(V)}\) is measurable with respect to \(\bm y_t^{(V)}\), the tower property yields
\[
    \mathbb E\left[
        \mathbb E[
            \|\hat{\bm x}_t^{(V)}-\bm x_t\|
            \mid
            \bm y_t^{(V)}
        ]
    \right]
    =
    \mathbb E\|\hat{\bm x}_t^{(V)}-\bm x_t\|
    \to0.
\]
Thus the conditional expectation converges to zero in probability. Conditional Markov's inequality gives, for every \(\varepsilon>0\),
\[
    \Pi_V\left(
        \{\bm x:\|\bm x-\hat{\bm x}_t^{(V)}\|>\varepsilon\}
        \mid
        \bm y_t^{(V)}
    \right)
    \le
    \frac{
        \mathbb E[
            \|\bm x_t-\hat{\bm x}_t^{(V)}\|
            \mid
            \bm y_t^{(V)}
        ]
    }{\varepsilon}
    \to0
\]
in probability. Since also \(\hat{\bm x}_t^{(V)}\to\bm x_t\) in probability, we conclude
\[
    \Pi_V(\cdot\mid \bm y_t^{(V)})
    \Rightarrow
    \delta_{\bm x_t}
    \qquad
    \text{in probability}.
\]

\paragraph{Step 3: The multi-horizon problem reduces to the active one-step problem.}
Let $\mathcal A$ be the set of active rollouts \(\mathcal A=\{(k_1,k_2):\lambda^{k_1,k_2}>0\}\). For each active constituent define
\[
    m^{k_1,k_2}
    =
    \inf_{H_V,\hat f_V,\alpha_V}
    \mathcal L_V^{k_1,k_2}(H_V,\hat f_V,\alpha_V).
\]
By joint realizability, a single tuple attains all these infima. Since every active term is bounded below by its own infimum,
\[
    \mathcal L_V(H_V,\hat f_V,\alpha_V)
    \ge
    \sum_{(k_1,k_2)\in\mathcal A}
    \lambda^{k_1,k_2}m^{k_1,k_2},
\]
and equality is attained by the jointly realizable tuple. Hence every global minimizer of the full objective minimizes each active constituent individually. In particular, since the forward one-step constituent is active, the identifiability analysis may be reduced to \(\mathcal L_V^{1,0}\).

\paragraph{Step 4: The Bayes-optimal all-views score converges to the clean latent score.}
Write \(\bm y=\bm y_t^{(V)}\) and \(\bm y'=\bm y_{t+1}^{(V)}\). The Bayes-optimal one-step contrastive score is, up to an anchor-only term \cite{wang2020understanding},
\[
    \psi_V^\star(\bm y,\bm y')
    =
    \log\frac{p_V(\bm y'\mid \bm y)}{q_V(\bm y')}
    +
    c_V(\bm y),
\]
where \(q_V\) is the marginal density of the negative all-views sample. Let
\[
    \Pi_V(d\bm x\mid \bm y)
    =
    p(\bm x_t\in d\bm x\mid \bm y_t^{(V)}=\bm y)
\]
and let \(\Pi_V^+(d\bm x'\mid \bm y')\) be the posterior law of \(\bm x_{t+1}\) under the marginal \(q\). Bayes' rule gives
\[
    \frac{p_V(\bm y'\mid \bm y)}{q_V(\bm y')}
    =
    \iint
        \frac{p(\bm x'\mid \bm x)}{q(\bm x')}
        \Pi_V(d\bm x\mid \bm y)
        \Pi_V^+(d\bm x'\mid \bm y').
\]
By Step 2, both posteriors concentrate:
\[
    \Pi_V(d\bm x\mid \bm y_t^{(V)})\Rightarrow\delta_{\bm x_t}(d\bm x),
    \qquad
    \Pi_V^+(d\bm x'\mid \bm y_{t+1}^{(V)})\Rightarrow\delta_{\bm x_{t+1}}(d\bm x').
\]
Since the clean density ratio is continuous and finite on the relevant support,
\[
    \psi_V^\star(\bm y_t^{(V)},\bm y_{t+1}^{(V)})
    \longrightarrow
    \log p(\bm x_{t+1}\mid \bm x_t)
    -
    \log q(\bm x_{t+1})
    +
    c(\bm x_t).
\]

\paragraph{Step 5: Score matching induces affine recovery.}
By \(\textnormal{(A1)}\), every global minimizer of the full objective realizes the Bayes-optimal active one-step score. By \(\textnormal{(A2)}\), along any stable limiting subsequence the model score therefore satisfies the clean score identity. At this point, we have reduced the problem to the clean observation case already tackled in \cite{laiz2025}. The remaining of the proof argument can thus proceed analogously as in \cite{laiz2025}. We substitute the expression \ref{eq:model_definition_full_views} for our full-view model for $\psi_V^\ast$ and the potential $\alpha'$ absorbs the marginal term \(\log q(\bm x')\), while anchor-only terms are collected in \(c'(\bm x)\). Using the Gaussian transition density and absorbing constant factors into the positive definite matrix \(\Lambda\), we obtain, for all \(\bm x,\bm x'\in\mathcal U\),
\begin{equation}
    -
    \left\|
        \hat f(r(\bm x))-r(\bm x')
    \right\|_2^2
    =
    -
    (f(\bm x)-\bm x')^\top
    \Lambda
    (f(\bm x)-\bm x')
    +
    c'(\bm x).
\label{eq:appendix_score_matching}
\end{equation}
Taking the mixed derivative with respect to \(\bm x\) and \(\bm x'\) yields
\[
    J_r(\bm x')^\top
    J_{\hat f}(r(\bm x))
    J_r(\bm x)
    =
    \Lambda J_f(\bm x),
\]
up to the same harmless positive scalar absorbed into \(\Lambda\). The right-hand side is invertible because \(\Lambda\succ0\) and \(f\) is a diffeomorphism. Hence \(J_r(\bm x')\) is full rank. Moreover, for any two points \(\bm x'_1,\bm x'_2\), fixing \(\bm x\) and comparing the two identities gives
\[
    J_r(\bm x'_1)^\top
    J_{\hat f}(r(\bm x))
    J_r(\bm x)
    =
    J_r(\bm x'_2)^\top
    J_{\hat f}(r(\bm x))
    J_r(\bm x).
\]
The same identity also implies that the middle-right factor is invertible, so \(J_r(\bm x'_1)=J_r(\bm x'_2)\). Thus \(J_r\) is constant on the connected region \(\mathcal U\), and \(r\) is affine:
\[
    r(\bm x)=L\bm x+\bm b,
    \qquad L\in\mathrm{GL}(d).
\]

It remains to identify the dynamics. Multiplying \eqref{eq:appendix_score_matching} by \(-1\), both sides are strictly convex in \(\bm x'\). The right-hand side has unique minimizer \(\bm x'=f(\bm x)\). Since \(r\) is affine and invertible, the left-hand side has unique minimizer satisfying \(r(\bm x')=\hat f(r(\bm x))\). The identity holds pointwise in \(\bm x'\), so the minimizers coincide:
\[
    \hat f(r(\bm x))=r(f(\bm x)).
\]
Writing \(r(\bm x)=L\bm x+\bm b\), we obtain
\[
    \hat f(\bm z)
    =
    L f\left(L^{-1}(\bm z-\bm b)\right)+\bm b,
    \qquad
    \bm z\in r(\mathcal U),
\]
which concludes the proof.
\end{proof}

\begin{remark}[Amortized Denoising]
Theorem \ref{thm:identifiability_theorem} offers theoretical guarantees in the ideal setting of the infinite views limit $V \to \infty$ and for an encoder with simultaneous access to the complete set of views. In practice, our model \ref{eq:model_definition} departs from these idealized assumptions and tackles the denoising problem in an amortized setting with a relative small amount of views. We take the successful system identification achieved in our experiments as evidence that such technique works well in practice.
\end{remark}

\vspace{5pt}

\begin{remark}[Role of multi-step rollouts]
The theorem allows the training objective to average over arbitrary rollout horizons \((k_1,k_2)\). The proof uses only the one-step constituent \((k_1,k_2)=(1,0)\), because under joint realizability any global minimizer of the full objective also minimizes each active constituent term individually. Thus the additional horizon terms do not weaken the identifiability conclusion but are not needed in the proof. However, in practice, for gradient-based optimizations, these additional terms can impose useful additional constraints that regularize the learning problem. For example, longer rollouts can help in the identification of the deterministic component in the presence of significant latent noise, at the cost of more expensive training iterations.
\end{remark}

\section{Dynamical Systems Definition}
\label{sec:appendix_dyn_sys}

In this section we provide the continuous-time formulation for the dynamical systems evaluated in this work, alongside the model used for the external forcing $\bm u_t$ if present.

\textbf{Duffing} The Duffing oscillator is a non-linear second-order differential equation with external forcing that can equivalently be represented as a system of two first-order ordinary differential equations of the form:
\begin{align}
\begin{split}
    \dot x_t &= v_t\\
    \dot v_t &= -\delta v_t -\alpha x_t -\beta x_t^3 + \gamma \cos (\omega t )
\end{split}
\label{eq:appendix_duffing}
\end{align}

In our experiment we fixed $\alpha = -1$, $\beta = 1$, $\delta = 0.3$, $\gamma = 0.3$ and $\omega = 1.2$. Note that in this case $\bm u_t = \cos ( \omega t )$ and $C = \left[0, \gamma \right]^\top$. We fixed $\bm \varepsilon_t \sim \mathcal{N}(0, \sigma_\epsilon \mathrm{Id})$, $\sigma_\epsilon = 0.05$ and used as integration time-step $\Delta t = 0.05$ with integration horizon $\kappa = 8$.

\textbf{Lorenz} The Lorenz chaotic attractor is a system of three first-order ordinary differential equations and it is a classic system that exhibits chaotic behavior. It is defined as:
\begin{align}
    \begin{split}
        \dot x_t &= \sigma ( y_t - x_t)\\
        \dot y_t &= x_t ( \rho - z_t) - y_t\\
        \dot z_t &= x_t y_t - \beta z_t
    \end{split}
\label{eq:appendix_lorenz}
\end{align}
In our experiment we fixed $\rho=28$, $\beta = 8/3$ and $\sigma = 10$. The latent noise was $\bm \varepsilon_t \sim \mathcal{N}(0, \sigma_\epsilon \mathrm{Id})$, $\sigma_\epsilon = 0.5$ and used as integration time-step $\Delta t = 0.01$ with integration horizon $\kappa = 8$.

\textbf{FitzHugh-Nagumo} The FitzHugh-Nagumo dynamical system is a relaxation oscillator and represents a prototype of an excitable system (e.g. a neuron). It is a forced dynamical system and is defined as:
\begin{align}
    \begin{split}
        \dot v_t &= v_t - \frac{v_t^3}{3} - w_t + I_\mathrm{ext}\\
        \tau \dot w_t &= v_t + a - b w_t
    \end{split}
\label{eq:appendix_fitzhug_nagumo}
\end{align}
In our experiment we fixed $a = 0.7$, $b = 0.8$, $\tau = 12.5$ and we model the external input current $I_\mathrm{ext}$ as a square wave with random amplitudes $| I_\mathrm{ext} | \sim U_{ \left[0, 1 \right]}$, period $T \sim U_{\left[ 8, 10 \right]}$ and $50\%$ duty-cycle. The latent noise was $\bm \varepsilon_t \sim \mathcal{N}(0, \sigma_\epsilon \mathrm{Id})$, $\sigma_\epsilon = 0.02$ and we used as integration time-step $\Delta t = 0.05$ with integration horizon $\kappa = 8$. Note that the Van-der-Pol oscillator is a special case of the FitzHugh–Nagumo model, with $a = b = 0$ and $C = \left[1, 0 \right]^\top$.

\textbf{Winner Take All} The winner-take-all dynamical systems are a class of system defined by the property of state self-excitation paired with reciprocal inhibition. We consider the following particular form the system:
\begin{align}
    \bm{\dot z}_t = \bm z_t \left( \zeta - \bm z_t^2 \right) - A \bm z_t + C \bm u_t,
\label{eq:appendix_winner_take_all}
\end{align}
where $\zeta = 1$ and $A$ is the interaction matrix which we take to be a matrix of all ones with a zero diagonal. In our experiments the state $\bm z_t$ had dimensionality $\bm z_t \in \mathbb{R}^3$. The external forcing $\bm u_t \in \mathbb{R}^d$ was again a random square wave with amplitude $| I | \sim U_{[0.1, 0.2]}$, period $T \sim U_{[10, 20]}$, $50\%$ duty-cycle and only one random component active per-cycle. We fixed $C = \mathrm{Id}$. The latent noise was $\bm \varepsilon_t \sim \mathcal{N}(0, \sigma_\epsilon \mathrm{Id})$, $\sigma_\epsilon = 0.01$ and we used as integration time-step $\Delta t = 0.05$ with integration horizon $\kappa = 8$. This particular system is often used to model competitive dynamics (\emph{e.g.} forced-choice decision-making in neuroscience).

\textbf{Double-Well} The double-well is a forced bistable system defined by the equation:
\begin{align}
    \begin{split}
        \dot x_t &= -\gamma \frac{\partial V}{\partial x} + \kappa \frac{\partial V}{\partial y}\\
        \dot y_t &=  -\gamma \frac{\partial V}{\partial x} - \kappa \frac{\partial V}{\partial y} + u_t\\[10pt]
        V &= \frac{x_t^4}{4} - \frac{x_t^2}{2} + \frac{y_t^2}{2}
    \end{split}
\label{eq:appendix_double_well}
\end{align}
where $V$ is a potential function and we fixed $\gamma = 0.7$ and $\kappa = 1.2$. As forcing function $u_t$ we used a square wave with amplitude $ I  \sim U_{[-2, +2]}$, period $T \sim U_{[8, 20]}$ and $50\%$ duty-cycle. For this system the control matrix is $C = [0, 1]^\top$. The latent noise was $\bm \varepsilon_t \sim \mathcal{N}(0, \sigma_\epsilon \mathrm{Id})$, $\sigma_\epsilon = 0.05$ and we used as integration time-step $\Delta t = 0.02$ with integration horizon $\kappa = 8$.

\textbf{Stuart-Landau} The Stuart-Landau system is a non-linear oscillator defined via the equation:
\begin{align}
    \begin{split}
        \dot x_t &= \mu x - \omega y - x \left(x^2 + y^2 \right)\\
        \dot y_t &= \omega x + \mu y - y \left( x^2 + y^2 \right)
    \end{split}
\label{eq:appendix_stuart_landau}
\end{align}
In our experiment we fixed $\mu = \omega = 1$. The latent noise was $\bm \varepsilon_t \sim \mathcal{N}(0, \sigma_\epsilon \mathrm{Id})$, $\sigma_\epsilon = 0.01$ and we used as integration time-step $\Delta t = 0.02$ with integration horizon $\kappa = 8$.

\textbf{Heteroclinic} An heteroclinic orbit is defined as a path in phase space that connects two saddle points. By heteroclinic system we identify here a metastable system defined as:
\begin{align}
    \begin{split}
        \dot x &= x \left( 1 - x - \alpha y - \beta z\right)  \\
        \dot y &= y \left( 1 - y - \alpha z - \beta x\right) \\
        \dot z &= z \left( 1 - z - \alpha x - \beta y\right)
    \end{split}
\label{eq:appendix_heteroclinic}
\end{align}

where we fixed $\alpha = 2$ and $\beta = \frac{1}{2}$. The latent noise was $\bm \varepsilon_t \sim \mathcal{N}(0, \sigma_\epsilon \mathrm{Id})$, $\sigma_\epsilon = 0.01$ and we used as integration time-step $\Delta t = 0.02$ with integration horizon $\kappa = 8$.

\section{Symbolic Recovery from Affine Orbits}
\label{sec:appendix_symbolic_recovery_affine_orbits}
We offer here a proof-of-concept application for the symbolic recovery of a latent dynamical system via identification of the sparsest representative of the system's coefficient representation along the affine group orbit.

We illustrate this on the Lorenz system in a noiseless setting as a proof of concept; a systematic study of symbolic recovery under noise, including the development of robust gauge-aware sparse regression, is an interesting direction for future work. In a noiseless configuration ($\sigma = 0$), our system achieves very accurate dynamics reconstruction ($\mathrm{dyn}R^2 = 99.11\%$) and we solved the optimization problem \ref{eq:sparsity_maximisation} via an $L_1$ proxy for sparsity with iterative pruning of small coefficients $| \theta_{kl}|\le 0.1$. We solved $200$ instances of the problem with different random initial conditions for both $L$ and $\bm b$. Here we report the best matching solution found.

\begin{table}[h]
\centering
\small
\setlength{\tabcolsep}{3pt}
\begin{tabularx}{\textwidth}{
>{\centering\arraybackslash}p{0.36\textwidth}
!{}
>{\centering\arraybackslash}p{0.60\textwidth}
}
\multicolumn{1}{c}{\textbf{Ground Truth System}}
&
\multicolumn{1}{c}{\textbf{Symbolic Recovery}}
\\
\midrule\\[-5pt]
$\begin{aligned}[t]
\dot x_t &= -10x_t + 10y_t\\
\dot y_t &= 28.0x_t - 1.0x_tz_t - 1.0y_t\\
\dot z_t &= 1.0x_ty_t - 2.7z_t
\end{aligned}$
&
$\begin{aligned}[t]
\dot x_t &= -10.0x_t + 2.9y_t {+0.13x_ty_t}\\
\dot y_t &= {-0.12x_ty_t} -2.1x_tz_t + 28.0x_t {-0.2y_tz_t} + 5.8y_t\\
\dot z_t &= {-0.8x_t^2} + 0.9x_ty_t {+0.82x_t} - 4.3z_t
\end{aligned}$
\end{tabularx}
\end{table}

While the overall structure is not far from the ground-truth, the equations contain several small spurious terms with coefficients below $1$, indicating that exact term recovery is not yet achieved and requires more advanced methods than simple $L_1$ thresholding within the gauge. Experimentally, we observed how minor imperfections in the flow field can hinder the symbolic regressor problem, which implicitly relies on exact term cancellations. Future work should experiment with more advanced frameworks, such as a paired teacher-student setup where a student network might learn to deviate from the exact affine group orbit in pursuit of significant sparsity gains. Additionally, more faithful approximations to the sparsity optimization problem \ref{eq:sparsity_maximisation}, \emph{e.g.} better $L_0$ approximations \cite{louizosLearningSparseNeural2018a, yangDeepHoyerLearningSparser2020, kolbDifferentiableSparsity$D$Gating2025}, might also be used to improve upon these initial attempts.

\section{Ablation Study on Integration Horizon}
\label{sec:appendix_ablation_integration_horizon}

\begin{figure}[tb]
    \centering
    \includegraphics[width=\textwidth]{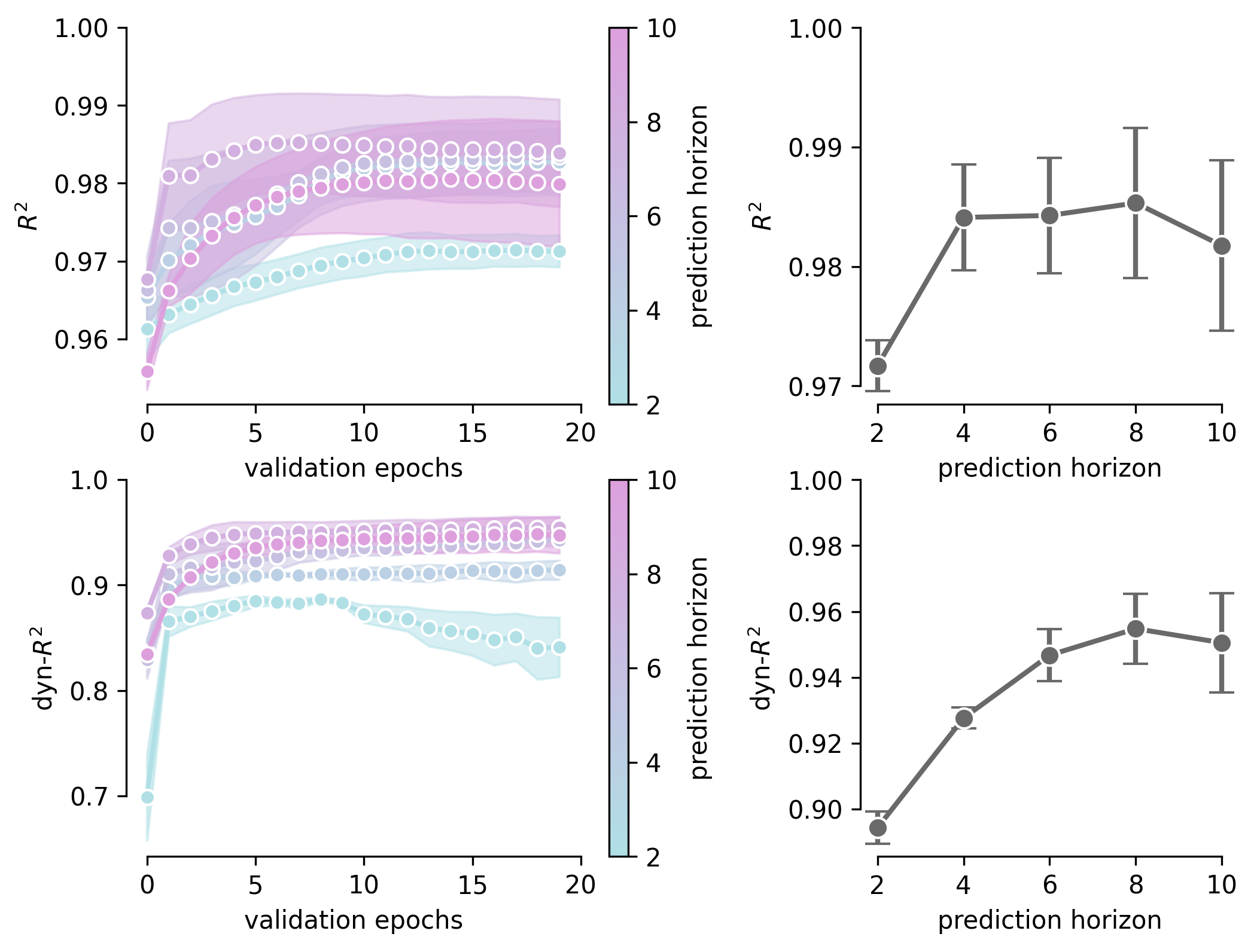}
    \caption{\textbf{Ablation on time horizon $\kappa$}. Ablation result for varying time integration horizons $\kappa = 2, 4, 6, 8, 10$ on the Lorenz system with Gaussian noise of intensity $\sigma = 0.3$. We trained for $200$ epochs and evaluated the model on the validation set every $10$ epochs (left column). We report the maximum achieved score on the right column for both the $R^2$ metric (top row) and $\mathrm{dyn}R^2$ metric (bottom row). Each marker is the average over 3 runs $\pm$ standard error of the mean.}
    \label{fig:ablation_time_horizon}
\end{figure}

One of the flexible components of the \textsc{DYSCO} algorithm \ref{eq:model_definition} is the maximal integration time-horizon $\kappa$. This parameter controls how far along the flow field the latent states are evolved before being compared via the similarity score. We argued in our second remark in Appendix section \ref{sec:appendix_theorem_proofs} how longer integration horizons might be beneficial in practice, despite Theorem \ref{thm:identifiability_theorem} claiming identification even for a single step of forward time-integration. Here we empirically investigate the impact of this hyperparameter on the model performances.

We consider the Lorenz system under Gaussian observation noise $\xi_t^a \sim \mathcal N (0, \sigma^2 \mathrm{Id})$ with intensity $\sigma = 0.3$. We trained our standard architecture ($4$-layer MLP encoder, polynomial functional basis, see Methods for details) for $200$ epochs and measured both $R^2$ and $\mathrm {dyn} R^2$ metrics on the held-out validation set every $10$ epochs. We explored the following time integration horizons $\kappa = 2, 4, 6, 8, 10$ and report our results in Figure \ref{fig:ablation_time_horizon}. We note that indeed longer horizons result in improved quality for both latent trajectories and flow fields, with a saturating (or peaking) effect after $\kappa = 8$. In practice, however, longer integration horizons come with a direct computational cost: on our hardware (NVIDIA RTX A4000) a full training run with $\kappa = 2$ lasted $\sim50$min, while an identical configuration for $\kappa = 10$ took $\sim2$h $20$min; with the majority of the overhead coming from the \textsc{odeint} rollout. Furthermore, we noted how longer rollouts ($\kappa = 10$) tended to suffer from training instability, probably due to the inherent difficulty of bootstrapping flow-fields from random initializations that are stable for longer horizons. For these reasons, we fixed $\kappa = 8$ for our main experiments.

\clearpage

\end{document}